\documentclass{article}

\usepackage{microtype}
\usepackage{graphicx}
\usepackage{caption}
\usepackage{subfigure}
\usepackage{svg} 
\usepackage{amssymb}
\usepackage{comment}
\usepackage{enumitem}
\usepackage{pifont}
\usepackage{booktabs} 

\usepackage{hyperref}

\usepackage[accepted]{icml2026}

\usepackage{amsmath}
\usepackage{amssymb}
\usepackage{mathtools}
\usepackage{amsthm}

\usepackage[capitalize,noabbrev]{cleveref}

\theoremstyle{plain}
\newtheorem{theorem}{Theorem}[section]

\theoremstyle{definition}

\newtheorem{assumption}[theorem]{Assumption}
\theoremstyle{remark}

\usepackage[textsize=tiny]{todonotes}

\icmltitlerunning{MARVL: Multi-Stage Guidance for Robotic Manipulation via Vision-Language Models}

\begin{document}

\twocolumn[
\icmltitle{MARVL: Multi-Stage Guidance for Robotic Manipulation via Vision-Language Models}



\icmlsetsymbol{equal}{*}

\begin{icmlauthorlist}
\icmlauthor{Xunlan Zhou}{equal,njuis,lamda}
\icmlauthor{Xuanlin Chen}{equal,njuis,lamda}
\icmlauthor{Shaowei Zhang}{lamda,njuai}
\icmlauthor{ShengHua Wan}{lamda,njuai}
\icmlauthor{Xiaohai Hu}{uw}
\icmlauthor{Lei Yuan}{lamda,njuai}
\icmlauthor{De-chuan Zhan}{lamda,njuai}
\end{icmlauthorlist}

\icmlaffiliation{njuis}{School of Intelligent Science and Technology, Nanjing University, China}
\icmlaffiliation{lamda}{National Key Laboratory for Novel Software Technology, School of Artificial Intelligence, Nanjing University, China}
\icmlaffiliation{njuai}{School of Artificial Intelligence, Nanjing University, China}
\icmlaffiliation{uw}{MACS Lab, University of Washington}

\icmlcorrespondingauthor{De-chuan Zhan}{zhandc@lamda.nju.edu.cn}

\icmlkeywords{Machine Learning, ICML}

\vskip 0.3in
]



\printAffiliationsAndNotice{\icmlEqualContribution} 


\begin{abstract} Designing dense reward functions is pivotal for efficient robotic Reinforcement Learning (RL). However, most dense rewards rely on manual engineering, which fundamentally limits the scalability and automation of reinforcement learning. While Vision-Language Models (VLMs) offer a promising path to reward design, naïve VLM rewards often misalign with task progress, struggle with spatial grounding, and show limited understanding of task semantics. To address these issues, we propose \textbf{MARVL}—\textbf{M}ulti-st\textbf{A}ge guidance for \textbf{R}obotic manipulation via \textbf{V}ision-\textbf{L}anguage models. MARVL fine-tunes a VLM for spatial and semantic consistency and decomposes tasks into multi-stage subtasks with task direction projection for trajectory sensitivity. Empirically, MARVL significantly outperforms existing VLM-reward methods on the Meta-World benchmark, demonstrating superior sample efficiency and robustness on sparse-reward manipulation tasks. \end{abstract}

\section{Introduction}

Reinforcement learning (RL) has achieved notable progress in robotic manipulation \citep{levine2016end,kalashnikov2018scalable,gu2017deep,han2023survey,kroemer2021review,tang2025deep,singh2022reinforcement}, yet reward specification remains a bottleneck \citep{yu2025reward,ibrahim2024comprehensive, fu2024robot, kaelbling1996reinforcement, li2017deep, eschmann2021reward, lee2026roboreward}. Vision-Language Models (VLMs) offer a promising path toward reward design by grounding rewards in natural language \citep{ma2024vision, rocamonde2023vision,venuto2024code,lee2026roboreward,sontakke2023roboclip,wang2024rl, venkataraman2024real}. A widely adopted approach computes the cosine similarity between visual embeddings of environment observations and text embeddings of task instructions, using this score as a dense reward signal \citep{rocamonde2023vision,adeniji2023language, baumli2023vision, ma2023liv, furl}. This strategy leverages the rich knowledge of pre-trained VLMs, but its effectiveness in guiding RL remains limited.


\begin{figure}[t!]
    \centering
    \includegraphics[width=\linewidth]{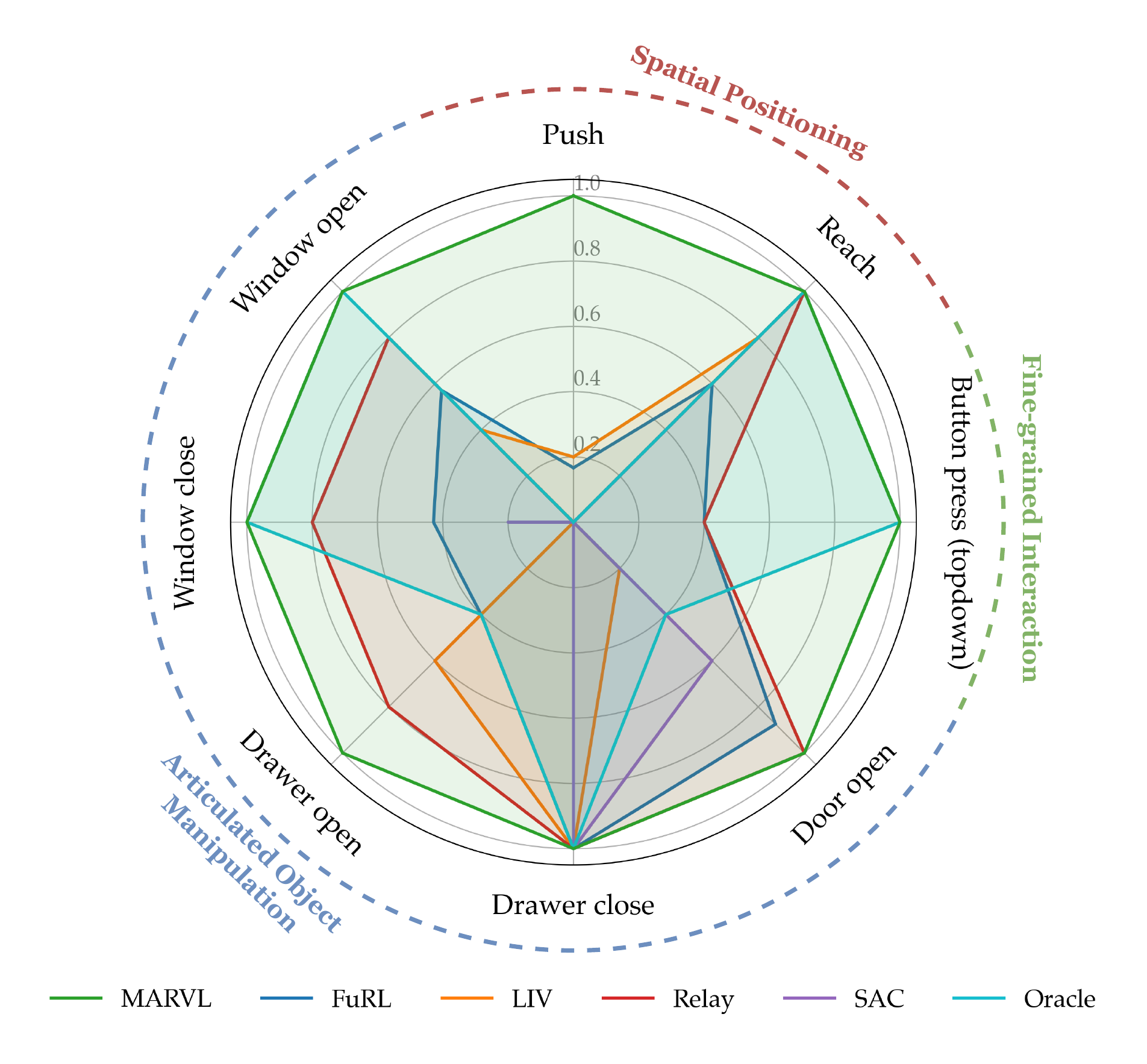}
    \caption{Radar plot of performance across eight Meta-World manipulation tasks. MARVL achieves consistently strong and balanced performance across all skill categories, surpassing the Oracle reward on several tasks and outperforming prior VLM-based reward methods.}
    \label{fig:fig1}
    \vspace{-3mm}
\end{figure}

Despite the advancements introduced in these varied studies \citep{rocamonde2023vision,adeniji2023language, baumli2023vision, ma2023liv, furl, huang2024dark}, we find that VLM-based rewards continue to face three critical challenges: \textbf{(i) Weak Spatial Grounding}: VLMs struggle with cross-view consistency and are highly sensitive to camera perspectives, leading to misaligned rewards. \textbf{(ii) Lack of Progress Awareness}: embeddings are easily distracted by irrelevant visual factors, failing to reflect incremental progress and exposing agents to reward hacking. \textbf{(iii) Semantic Misalignment}: VLMs frequently misinterpret task instructions, leading to unreliable reward signals. These limitations suggest that current progress in VLM-guided RL may be less robust than initially believed.


To address these challenges, we propose \textbf{MARVL}—\textbf{M}ulti-st\textbf{A}ge guidance for \textbf{R}obotic manipulation via \textbf{V}ision-\textbf{L}anguage models, a plug-and-play framework for RL algorithms. MARVL incorporates three key design components:
(1) \textbf{Scene-View Decomposition}: Disentangles task semantics from viewpoint noise via targeted fine-tuning to restore \textit{spatial grounding};
(2) \textbf{Task Direction Projection}: Projects embeddings onto instruction-defined vectors, transforming volatile similarities into monotonic signals for \textit{progress awareness};
(3) \textbf{Confidence-Thresholded Shaping}: Gates rewards based on semantic confidence, filtering spurious matches to mitigate \textit{semantic misalignment}.

Extensive experiments on the Meta-World benchmark demonstrate that MARVL substantially improves success rate and robustness over all VLM-reward baselines as shown in Figure \ref{fig:fig1}. The primary contributions of this work are as follows:



\begin{itemize}[leftmargin=*, itemsep=0pt]
    \item We analyze existing VLM reward paradigms and identify three key limitations: weak spatial grounding, lack of task progress sensitivity, and semantic misalignment.
    \item We propose \textbf{MARVL}, a plug-and-play framework that improves the quality and reliability of VLM rewards through targeted refinements.
    \item We demonstrate that MARVL consistently outperforms prior methods across diverse environments.
\end{itemize}

\section{Related Works}

\subsection{Markov Decision Process and RL}
Reinforcement learning (RL) provides a principled framework for sequential decision making under uncertainty, and is typically formalized as a Markov Decision Process (MDP) \citep{puterman2014markov,sutton2018reinforcement, li2017deep}. An MDP is defined by the tuple $(\mathcal{S}, \mathcal{A}, P, r, \gamma)$, where $\mathcal{S}$ denotes the state space, $\mathcal{A}$ the action space, $P(s'|s,a)$ the transition dynamics, $r(s,a)$ the reward function, and $\gamma \in [0,1)$ a discount factor. At each timestep $t$, the agent observes a state $s_t \in \mathcal{S}$, selects an action $a_t \in \mathcal{A}$, and transitions to the next state $s_{t+1}$ according to $P$, while receiving a scalar reward $r(s_t,a_t)$. The objective of RL is to learn a policy $\pi(a|s)$ that maximizes the expected discounted return $\mathbb{E}[\sum_{t=0}^{\infty} \gamma^t r(s_t,a_t)]$. In practice, this policy is typically modelled using a neural network $\pi_{\theta}(a|s)$ with learnable parameters $\theta$ \cite{mnih2013playing}. This formulation has proven effective in domains ranging from robotics to game playing, yet the design of the reward function $r$ remains a central challenge, particularly in tasks where dense supervision is difficult to obtain \cite{yu2025reward, eschmann2021reward, dewey2014reinforcement}.

\subsection{Vision-Language Models (VLMs) and RL}

Recent progress in vision-language models (VLMs) has demonstrated their ability to bridge vision and natural language through joint embedding spaces \citep{radford2021clip,alayrac2022flamingo, jia2021scaling,li2022blip}. A landmark example is CLIP \citep{radford2021clip}, which learns aligned representations of texts and images, enabling strong zero-shot transfer across diverse vision tasks. While such models were originally developed with general vision applications in mind, growing attention has been directed toward their integration with reinforcement learning (RL). VLMs provide rich semantic grounding that can benefit RL in multiple ways, including reward specification and shaping \citep{mahmoudieh22a,rocamonde2023vision,adeniji2023language,baumli2023vision,lubana2023fomo,dang2023clip,huang2024dark,furl}, task description and success verification \citep{du2023vision}, and state representation learning \citep{chen2024vision}. These directions highlight the potential of VLMs as powerful priors that enhance vision-based RL, offering richer representations and more flexible reward signals for complex tasks.

\begin{figure*}[t!]  
  \centering
  \includegraphics[width=\textwidth]{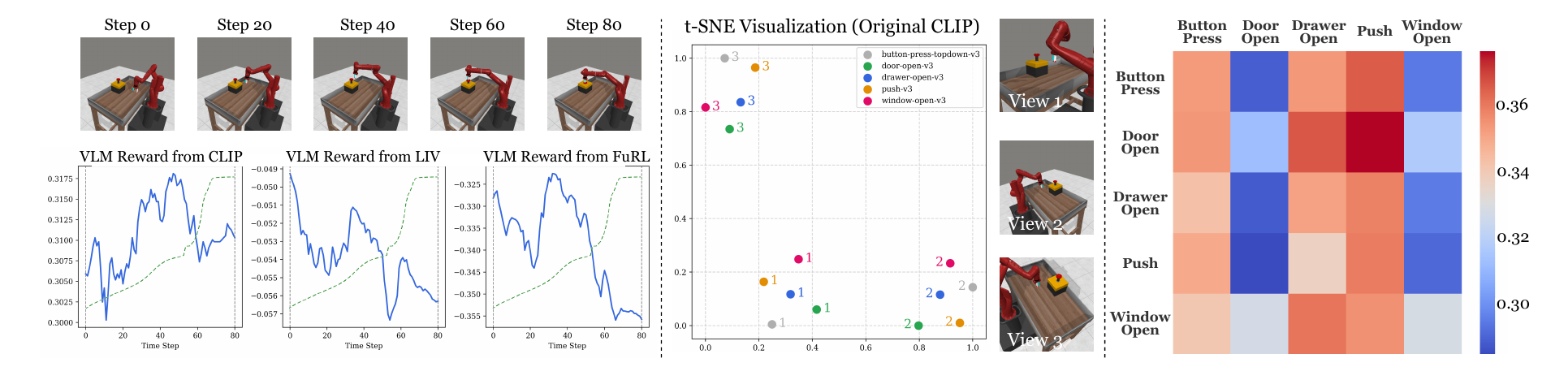} 
  \caption{\textbf{Reward Misalignment in VLM-Based Methods.} \textbf{Left:} VLM reward signals along an oracle \texttt{button-press-topdown} trajectory. The green dashed curve denotes the environment-provided dense reward in Meta-World, whose scale differs from VLM rewards and is shown only to indicate the overall trend of task progress. \textbf{Middle}: t-SNE projection of image embeddings from five Meta-World tasks under three viewpoints, showing that embeddings cluster by viewpoint rather than by task identity.        
\textbf{Right}: Text–image similarity matrix for the instruction set 
(\texttt{button-press-topdown}, \texttt{door-open}, \texttt{drawer-open}, \texttt{push}, and \texttt{window-open}) in original CLIP, where rows correspond to image embeddings and columns correspond to text embeddings. 
Ideally, the diagonal should dominate, but weak diagonal structure and noisy off-diagonal activations indicate poor alignment between text and image embeddings.}
  \label{fig:problem}
  \vspace{-2mm}
\end{figure*}

\subsection{VLMs as Rewards}


Leveraging Vision-Language Models (VLMs) as a source of reward has emerged as a rapidly growing direction in reinforcement learning \citep{mahmoudieh22a, rocamonde2023vision, adeniji2023language, baumli2023vision, lubana2023fomo, dang2023clip, nam2023lift,furl, venuto2024code,lee2026roboreward,sontakke2023roboclip,wang2024rl}. Some approaches directly treat VLM-based similarity scores between observations and task instructions as the primary reward signal \citep{mahmoudieh22a, rocamonde2023vision, adeniji2023language}, while others use VLM-derived rewards as an additional source of supervision to complement the environment’s original reward \citep{rocamonde2023vision, baumli2023vision, lubana2023fomo, dang2023clip,furl}. Together, these studies highlight the growing interest in integrating the rich knowledge encoded in VLMs into reinforcement learning reward design.

A key motivation behind this line of research is that many RL tasks naturally come with sparse rewards, making it difficult for agents to explore effectively. Given an observation $s_t$ at timestep $t$, the agent samples an action $a_t \sim \pi_\theta(a_t|s_t)$ and receives a task reward $r^{\text{task}}_t$ upon execution. In common sparse settings, this reward is defined as $r^{\text{task}}_t = \delta_{\text{success}}$, i.e., $1$ only upon task success and $0$ otherwise. Such sparse feedback is ubiquitous in practice but significantly hinders efficient training.


To mitigate this challenge, recent works propose augmenting the sparse task reward with an auxiliary signal derived from VLMs \citep{chen2024vision,rocamonde2023vision,furl}. Concretely, the reward is reshaped as:

\begin{equation}
r_t = r^{\text{task}}_t + \rho \cdot r_t^\text{vlm}, \label{eq:main}
\end{equation}

where $\rho$ is a scalar balancing the task reward and the VLM-based reward $r_t^\text{vlm}$. A common choice for $r_t^\text{vlm}$ is the CLIP score, defined as the cosine similarity between the text embedding of the language goal and the image embedding of the current observation:

\begin{equation}
r_t^\text{vlm} \triangleq r_t^\text{CLIP} = \frac{\langle e_{l}, e_{o_t} \rangle}{\lVert e_l \rVert \cdot \lVert e_{o_t} \rVert},
\end{equation}

where $o_t$ denotes the observation at step $t$, $l$ denotes the language instruction, and $e_{l}$, $e_{o_t}$ denote the embedding of the instruction and the observation \citep{radford2021clip}.

\subsection{Challenges in VLM Rewards}


Despite its simplicity, the CLIP-based reward signal is often criticized for its fuzziness, as existing literature \citep{furl,lubana2023fomo,huang2024dark,mahmoudieh22a,rocamonde2023vision} demonstrates that such signals are frequently compromised by semantic misalignment and false positives, failing to provide the robust feedback required for complex tasks. For instance, high similarity scores can be triggered by spurious visual–text matches that are unrelated to task progress, leading the agent toward ineffective exploration. A concrete example of such noisy and uninformative reward dynamics is shown in Figure~\ref{fig:problem} left.

To address these reliability issues, recent work has explored differing strategies. One direction focuses on suppressing irrelevant noise. BIMI \cite{huang2024dark} was introduced to mitigate false positives, and VLM-RMs-GB \cite{rocamonde2023vision} proposed Goal-Baseline Regularization to project out extraneous information. Despite their contributions, BIMI has not effectively solved the false positive problem, and VLM-RMs-GB in general lacks the adaptability required for complex tasks. Alternatively, approaches like LIV \cite{ma2023liv} and FuRL \cite{furl} attempt to fine-tune the reward representations directly through value-oriented objectives or alignment. LIV trains embeddings with a value-oriented objective, and can optionally be fine-tuned on tailored datasets for domain adaptation, while FuRL applies lightweight alignment to the reward space. However, as also illustrated in Figure~\ref{fig:problem}, these approaches remain imperfect.

While these studies provide valuable insights, their analyses remain fragmented: some highlight noise and instability \citep{huang2024dark, lubana2023fomo, rocamonde2023vision}, others note misalignment \cite{furl, ma2023liv}, but none offer a comprehensive perspective. In this work, we identify three fundamental challenges underlying VLM rewards—\textbf{weak spatial grounding}, \textbf{lack of task progress sensitivity}, and \textbf{semantic misalignment}—which we discuss systematically in this
section.

\begin{figure*}[t!]  
  \centering
  \includegraphics[width=\textwidth]{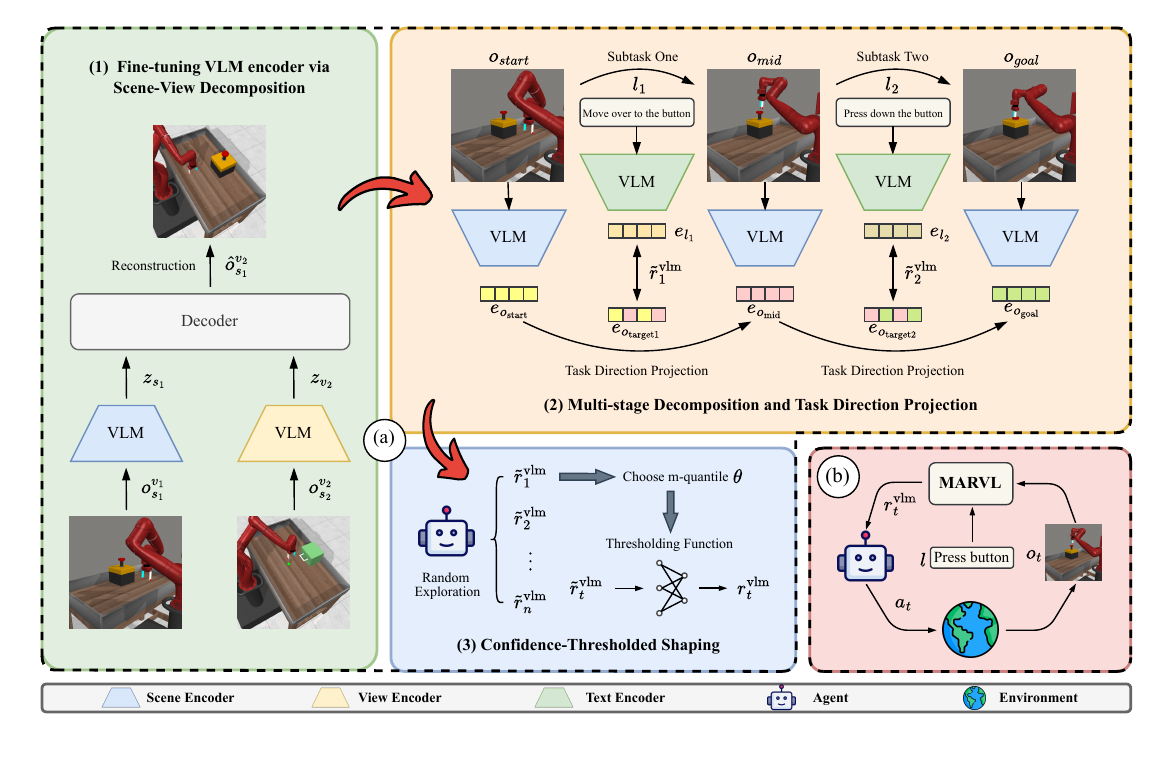} 
  \caption{\textbf{Overview of MARVL.} 
(a) We first fine-tune the VLM encoder via Scene-View Decomposition, followed by Multi-stage Decomposition with Task Direction Projection and Confidence-Thresholded Shaping to derive calibrated VLM rewards. 
(b) MARVL could be easily integrated into the RL loop, where the agent interacts with the environment under dense VLM reward guidance.}
  \label{fig:overview}
\end{figure*}

\textbf{Weak Spatial Grounding.} Ideally, VLM-based rewards should capture task semantics invariant to camera perspectives \citep{ma2023liv, pang2025reviwo, liu2024robouniview, zhou2023semantically}. However, our analysis reveals that VLM representations are dominated by superficial visual cues: embeddings from diverse Meta-World tasks cluster by \textit{camera viewpoint} rather than \textit{task identity} (see Figure~\ref{fig:problem}, middle). This lack of viewpoint invariance indicates that the visual space fails to ground instructions in the physical scene. Consequently, the reward signal tracks incidental observational shifts instead of actual agent progress, fundamentally destabilizing text--image alignment.

\textbf{Lack of Progress Awareness.} While minor fluctuations are expected, a dense reward signal should exhibit a strong positive correlation with task progress \citep{furl, ma2023liv, huang2024dark}. However, as shown in Figure~\ref{fig:problem} (left), VLM rewards along an oracle trajectory display excessive volatility, failing to capture the global trend of task completion. The signal frequently oscillates or drops even as the agent approaches the goal. This poor signal-to-noise ratio indicates that the VLM is insensitive to fine-grained physical states, rendering the reward unreliable for guiding policy learning.


\textbf{Semantic Misalignment.} Leveraging VLMs for reward design assumes a well-aligned latent space where instructions naturally map to their corresponding observations \citep{furl,huang2024dark,lubana2023fomo,sontakke2023roboclip,ma2023liv}. 
However, Figure~\ref{fig:problem} (right) contradicts this assumption: the heatmap exhibits diffuse activations with only a weak diagonal, indicating poor task separability in the embedding space. Consistently, Figure~\ref{fig:problem} (left) shows that even after adaptations such as LIV and FuRL, the reward remains strictly negative, suggesting a persistent representation misalignment. 
In this setting, the reward provides only a weak ordinal signal---ranking relative similarities among poorly aligned instances---rather than a calibrated measure of semantic correctness.

\section{Methods}

\subsection{Fine-tuning VLM via Scene-View Decomposition}

To mitigate the view-sensitivity of off-the-shelf VLMs, we employ a \textbf{Scene-View Decomposition} framework inspired by \cite{pang2025reviwo}. We leverage an automatically collected dataset of \textbf{500} multi-view Meta-World images  to adapt the VLM's visual encoder toward robust spatial grounding.

\textbf{Disentangled Architecture.} As illustrated in Figure~\ref{fig:overview}(a), we adopt a dual-encoder--decoder structure initialized from CLIP. Given an image $x$, the model produces two disentangled codes:
\begin{itemize}[leftmargin=*, itemsep=0pt]
    \item \textbf{Scene representation} $z_s = E_s(x)$: captures view-invariant, task-critical semantics (e.g., robot and object states). $E_s$ is retained as the backbone for our downstream reward model.
    \item \textbf{View representation} $z_v = E_v(x)$: captures nuisance variations like camera pose. This auxiliary branch is discarded after training.
\end{itemize}

\begin{figure}
    \centering
    \includegraphics[width=\linewidth]{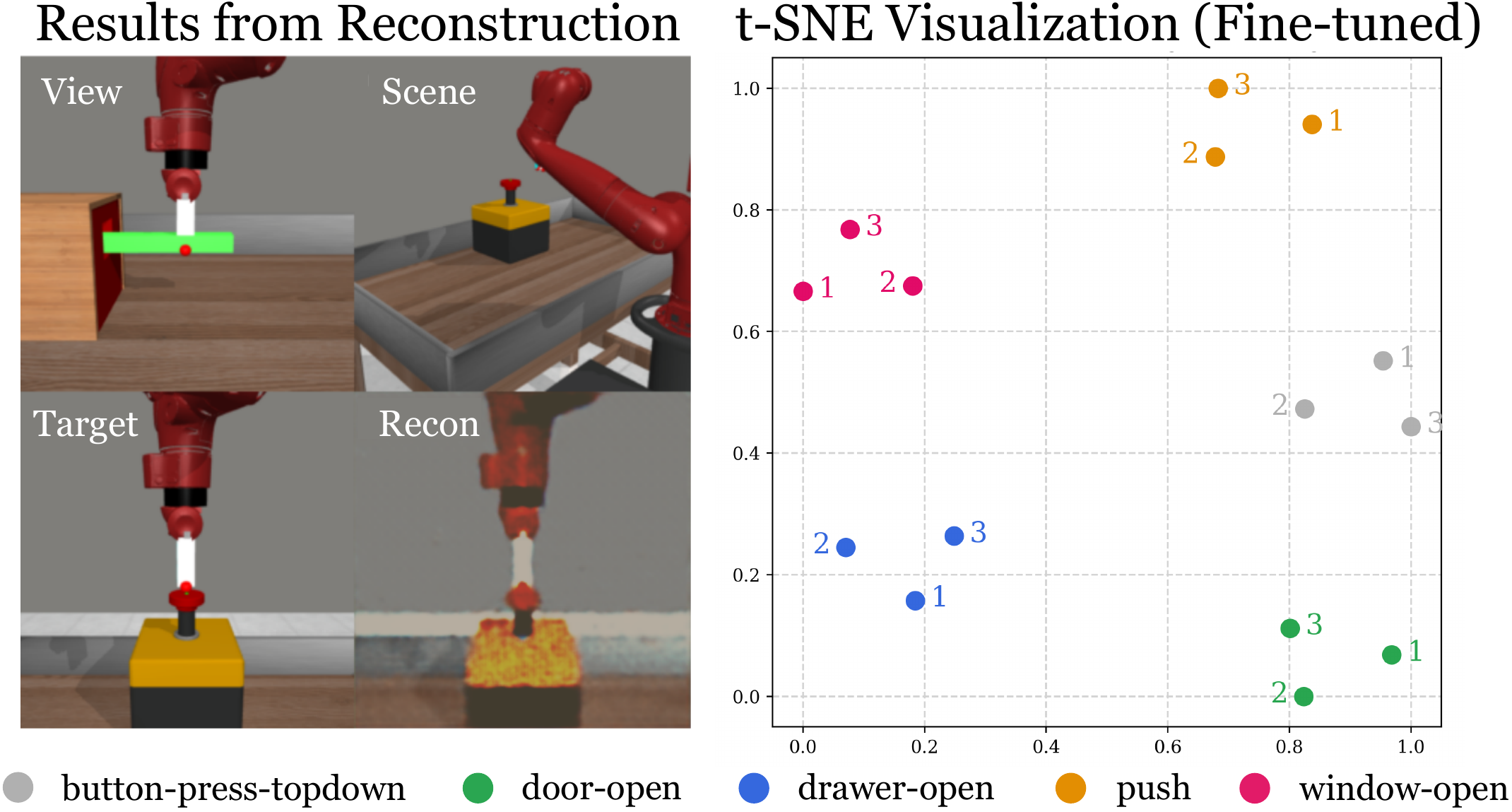}
    \caption{\textbf{Effectiveness of Scene-View Decomposition.} 
\textbf{Left:} Reconstructions retain critical spatial structure (e.g., arm pose) despite high-frequency pixel abstraction, verifying the encoder's geometric grounding. 
\textbf{Right:} t-SNE visualization confirms that fine-tuned embeddings cluster by \textit{task identity} rather than camera viewpoint, demonstrating effective disentanglement of semantics from nuisance factors.}
    \label{fig:ft}
    \vspace{-2mm}
\end{figure}

To enforce disentanglement, we employ a reconstruction objective based on latent swapping. Given two observations $o_{s_1}^{v_1}$ and $o_{s_2}^{v_2}$ that differ in both scene and viewpoint, we combine the scene code of the first ($z_{s_1}$) with the view code of the second ($z_{v_2}$) to synthesize a cross-compositional reconstruction $\hat{o}_{s_1}^{v_2} = D(z_{s_1}, z_{v_2})$. This forces $z_s$ to isolate spatial semantics from perspective shifts. The total loss is defined as:
\begin{equation}
    \mathcal{L} = \mathcal{L}_{\text{recon}} + \lambda_1 \mathcal{L}_{\text{shuffle}} + \lambda_2 \mathcal{L}_{\text{consistency}} + \lambda_3 \mathcal{L}_{\text{clip}}.
\end{equation}
We provide full definitions, mathematical formulations, and fine-tuning hyperparameters for each loss term in Appendix~C.

As shown in Figure~\ref{fig:ft}, this process successfully realigns the embedding space: scene embeddings cluster by task identity rather than viewpoint, providing a stable, spatially grounded foundation for MARVL.

\subsection{Task Direction Projection via Multi-Stage Decomposition}

To bridge the gap between high-level instructions and low-level control, we adopt a Multi-stage Decomposition strategy (Figure~\ref{fig:overview}(a)), which breaks a long-horizon task into a sequence of subtasks $\{l_k\}$ with corresponding intermediate goal states. This decomposition localizes progress evaluation to short horizons, where dense reward signals are most critical for effective policy learning.

However, off-the-shelf VLM embeddings are high-dimensional and dominated by task-irrelevant variations, causing raw similarity-based rewards to fluctuate and violate monotonicity even within a single stage. To address this issue, we introduce \textbf{Task Direction Projection (TDP)}, a geometric mechanism that suppresses orthogonal noise while preserving task-aligned progress signals.

\textbf{Task Direction Principle.}
Rather than assuming linear evolution of physical trajectories or embeddings, we rely on a weaker condition: for each subtask, there exists a task-aligned direction in the joint visual–semantic embedding space such that progress along this direction provides a monotonic proxy for task completion after noise attenuation. This direction is defined by the displacement between the start and goal embeddings and is used as a reference axis for filtering, not as a literal model of task execution.

Formally, given a task direction vector $\mathbf{d}$, we define a projection operator
\begin{equation}
    \mathcal{P}_{\mathbf{d}}(\mathbf{x}) 
    =
    \left(
    \alpha \frac{\mathbf{d}\mathbf{d}^\top}{\lVert \mathbf{d} \rVert^2}
    +
    (1-\alpha)\mathbf{I}
    \right)\mathbf{x},
    \quad \alpha \in [0,1].
\end{equation}
This operator preserves the component of $\mathbf{x}$ aligned with $\mathbf{d}$ while attenuating orthogonal components by a factor $(1-\alpha)$.
As shown in Appendix~D, this anisotropic filtering substantially improves the signal-to-noise ratio of the embedding trajectory and restores monotonicity of the induced reward with respect to a projected progress coordinate, without assuming linearity in the underlying physical dynamics. We apply TDP to both modalities as follows.

\begin{figure}[t!]
  \centering
  \includegraphics[width=\linewidth]{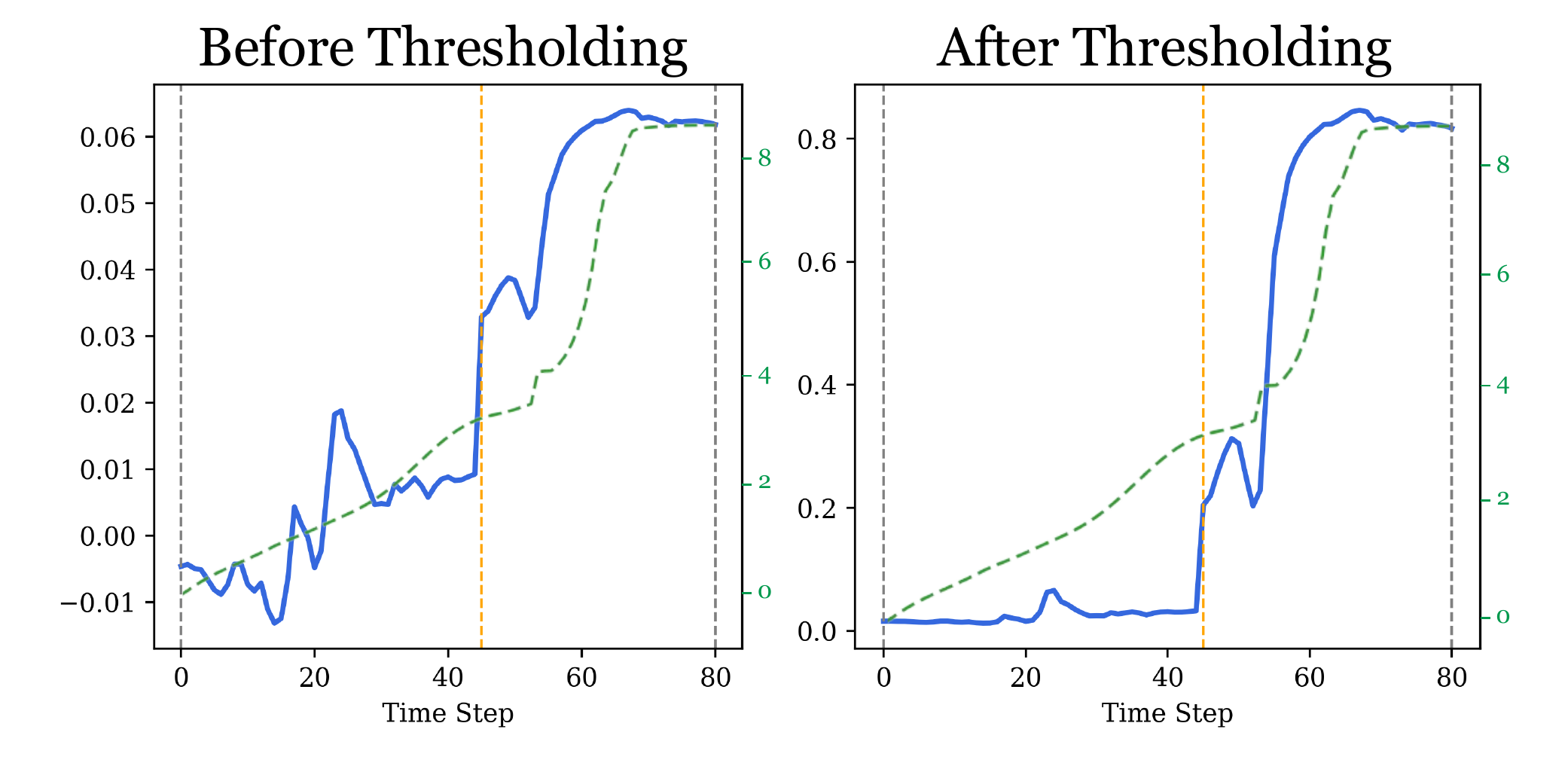}
 \caption{Raw vs. thresholded MARVL rewards on the \texttt{button}-\texttt{press}-\texttt{topdown} task using oracle trajectories. 
The raw MARVL reward (left) exhibits noise and spurious positive activations at early steps, while the thresholded reward (right) suppresses these false positives and amplifies high-confidence signals, yielding a cleaner and more reliable reward signal under MARVL. The orange dashed line indicates the stage transition point. The green dashed curve shows the environment-provided dense reward.}
  \label{fig:thresholding}
  \vspace{-2mm}
\end{figure}

\textbf{Text Projection.} We isolate the dynamic ``action" semantics from static context. Given a subtask $l_k$ (e.g., ``robot press button") and a baseline $l_b$ (e.g., ``robot arm"), we define the text direction $\mathbf{d}_{\text{text}} = e_{l_k}- e_{l_b}$. The refined target $e'_{l_k} = \mathcal{P}_{\mathbf{d}_{\text{text}}}(e_{l_k})$ thus focuses strictly on the manipulation intent.

\begin{figure*}[t!]
  \centering
  \includegraphics[width=\textwidth]{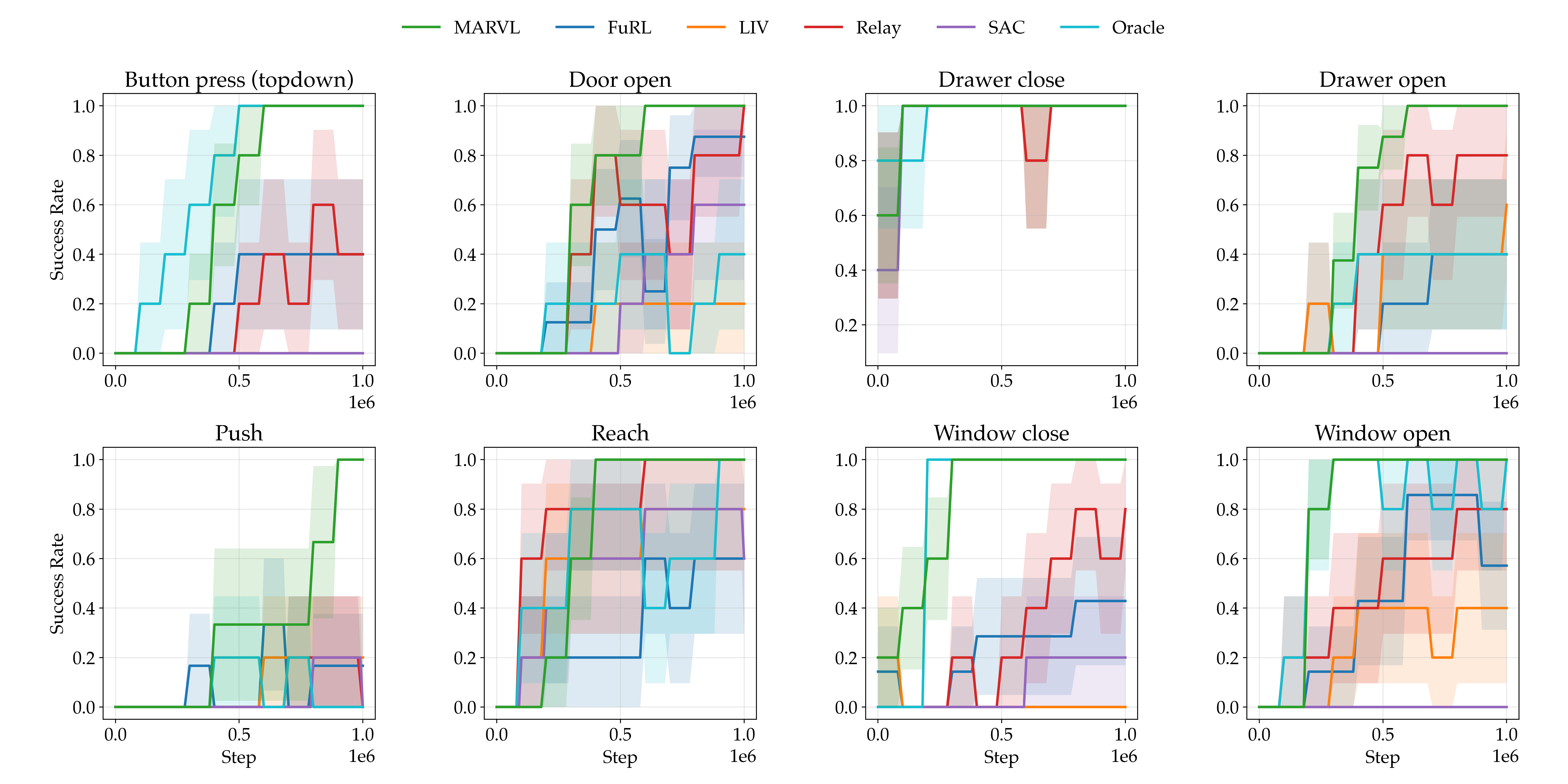}
  \caption{Learning curves on eight Meta-World manipulation tasks. We report the mean success rate across 5 random seeds. MARVL (ours) is compared against state-of-the-art VLM-reward baselines (FuRL, LIV, Relay) and an Oracle agent trained with the environment’s standard hand-crafted dense reward using internal states. MARVL consistently outperforms prior VLM methods and achieves performance parity with the privileged Oracle, demonstrating that learned visual rewards can effectively match the quality of engineered dense rewards.}
  \label{fig:mainexp}
  \vspace{-2mm}
\end{figure*}

\textbf{Image Projection.} 
Given the start observation $o_{\text{start}}$ and the subtask goal observation $o_{\text{goal}}$, we define the visual task direction 
$\mathbf{d}_{\text{img}} = e_{o_{\text{goal}}} - e_{o_{\text{start}}}$. 
The projected embedding of the current observation $o_t$ is defined as:
\begin{equation}
    e'_{o_t} = \mathcal{P}_{\mathbf{d}_{\text{img}}}(e_{o_t}-e_{o_{\text{start}}}) + e_{o_{\text{start}}}.
\end{equation}
This formulation grounds the reward in a projected progress coordinate defined by the start and goal embeddings, 
suppressing background variations while preserving task-aligned semantic change.

\textbf{Reward and Transition.} The dense reward is computed as the cosine similarity between the projected embeddings: $\tilde{r}_t = \cos(e'_{o_t}, e'_{l_k})$. Stage transitions are automatically managed by monitoring the similarity between the current observation and the intermediate target embedding. Once this similarity surpasses a predefined threshold, the system recognizes the subtask as complete and advances the agent to the next stage. In our experiments, intermediate target images are manually collected; however, Appendix~B.3 demonstrates that they can be generated fully automatically using LLMs and image generation models.

\subsection{Confidence-Thresholded Shaping}

In VLM-based rewards, irrelevant actions often produce small but consistent positive scores due to generic semantic similarity. As a result, agents can accumulate these trivial signals instead of making genuine task progress.

To rectify this, we introduce \textbf{Confidence-Thresholded Shaping (CTS)}. We first quantify the environment's background noise by collecting scores from the first 10k steps, setting the threshold $\theta$ as the $m$-th quantile. We then apply a soft gating function:
\begin{equation}
    r^{\text{vlm}}_t = \sigma\big(\kappa (\tilde{r}^{\text{vlm}}_t - \theta)\big).
\end{equation}
This mechanism serves a \textbf{dual purpose}: it effectively \textbf{zeros out low-magnitude noise} to prevent spurious accumulation, while simultaneously \textbf{amplifying high-confidence matches} via the steepness parameter $\kappa$. As illustrated in Figure~\ref{fig:thresholding}, this sharpens the reward gradient for semantically relevant states, blocking trivial solutions while providing a decisive, high-contrast signal once the agent breaks through the noise floor. We further explore more advanced filtering schemes---such as adaptive EMA with hysteresis and Kalman filtering---which are discussed in Appendix~E.3.

\section{Experiments}  

In this section, we investigate the following questions: (1) How does MARVL perform compared to existing VLM-reward baselines? (2) Are the main components of MARVL effective individually? (3) Can MARVL generalize across different RL backbones? (4) Does MARVL transfer well to new environments and camera settings?


\subsection{Experiment Setup}

We conduct our main evaluation on the \textbf{Meta-World} benchmark \cite{yu2020meta}, a standard suite of vision-based robotic manipulation tasks. We select eight environments spanning diverse behaviors, including pushing, button pressing, door opening, and drawer manipulation. Meta-World is widely used in prior work on VLM-based rewards and visual manipulation, enabling fair comparison with existing methods, and its tasks are semantically well-defined and exhibit clear notions of incremental progress, making them particularly suitable for evaluating language-guided reward design. To assess cross-domain generalization, we further evaluate MARVL on the \textbf{Panda-Gym} benchmark \cite{gallouedec2021panda}, which differs from Meta-World in both robot embodiment and visual appearance.

\subsection{Baselines}

To validate the effectiveness of MARVL, we compare it against the following baselines. All methods use the same state-based SAC backbone \cite{haarnoja2018soft} and differ only in how rewards are computed. The hyperparameters for the SAC are attached in Appendix B.


\textbf{1. SAC} \cite{haarnoja2018soft}. A standard SAC agent using only the sparse binary task reward $r^{\text{task}}_t$. This represents the lower bound of performance given no additional shaping signals.

\textbf{2. LIV} \citep{ma2023liv}. A state-based SAC agent trained
with task reward and dense LIV reward as in Eqn. \ref{eq:main}.

\textbf{3. FuRL} \citep{furl}. A recent VLM-based reward method that leverages two key mechanisms: lightweight reward alignment and relay RL, where a VLM-guided agent and an auxiliary SAC agent take turns acting in the environment and share a replay buffer so that the SAC agent can help the VLM agent escape local minima induced by fuzzy VLM rewards.

\textbf{4. Relay} \citep{lan2023can}. A simplified variant of FuRL that keeps only the relay RL mechanism used  in FuRL.

\textbf{5. MARVL}. Our full method including Scene-View Decomposition,  Task Direction Projection, and confidence-thresholded reward shaping.

\begin{figure}[t]
    \centering
    \includegraphics[width=\linewidth]{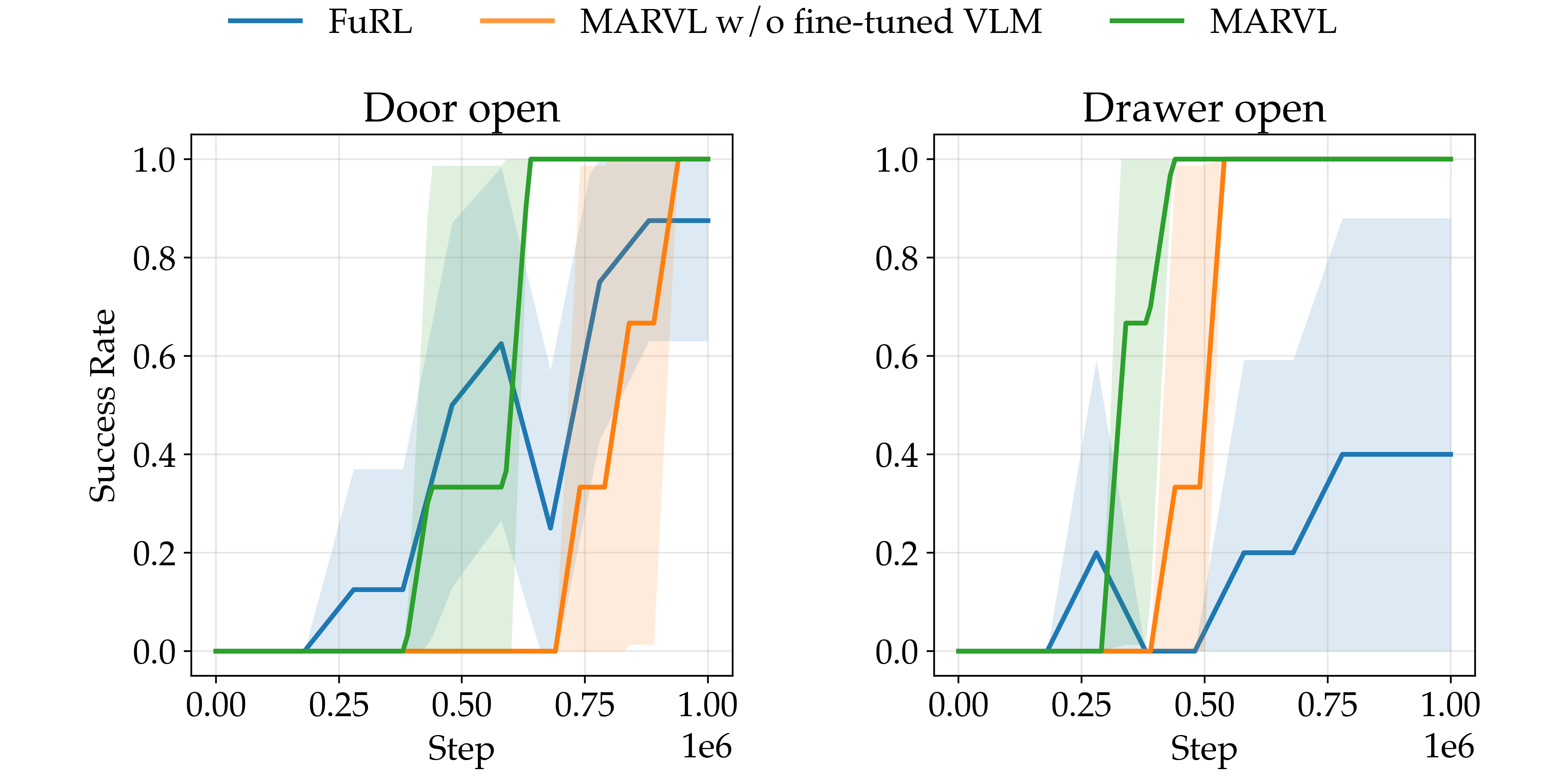}
    \caption{\textbf{Impact of Scene-View Decomposition}}
    \label{fig:w/ovlm}
\vspace{-2mm}
\end{figure}

\begin{figure}[t]
\vspace{-1mm}
    \centering
    \includegraphics[width=\linewidth]{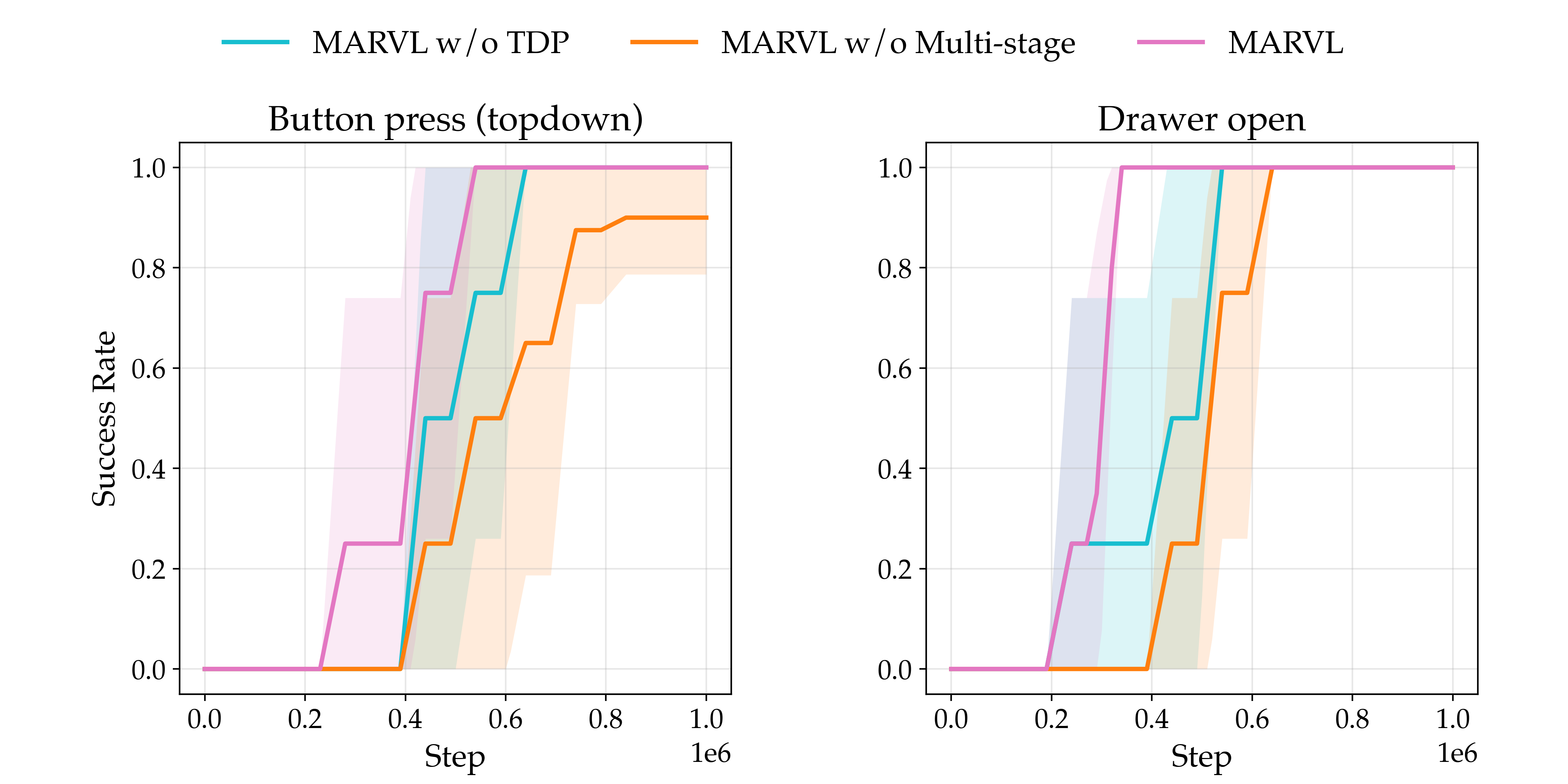}
    \caption{\textbf{Impact of TDP and Multi-Stage Decomposition}}
    \label{fig:TDP}
      \vspace{-2mm}
\end{figure}

\subsection{Main Results on Meta-World Tasks}



Figure~\ref{fig:mainexp} summarizes the performance of MARVL across eight sparse-reward tasks, compared against prior VLM-based methods and an Oracle baseline. Here, the Oracle corresponds to an agent using the dense rewards provided by Meta-World, which are manually designed from environment states. Across all tasks, MARVL consistently achieves higher success rates and faster convergence than existing VLM-based rewards. While methods such as FuRL and Relay improve exploration over standard RL, they remain sensitive to noisy and unstructured visual signals, often converging to suboptimal plateaus. 

We further compare MARVL against the Meta-World dense Oracle rewards. Despite relying solely on learned visual–semantic signals, MARVL matches the Oracle’s performance on tasks such as Button Press and Window Close, and exceeds it in exploration-heavy environments, including Push and Door Open. This indicates that semantically grounded rewards can provide even more effective guidance than handcrafted dense rewards in complex, sparse-reward settings.

\subsection{Ablation Studies}
In this section, we conduct a comprehensive analysis to investigate the internal mechanisms and external robustness of MARVL.

\textbf{Impact of Scene-View Decomposition.}
Removing Scene-View Decomposition and relying on raw, pre-trained VLM encoders consistently degrades performance (Figure~\ref{fig:w/ovlm}), indicating that separating view-dependent variation from scene-level semantics improves reward stability. Further analysis is provided in Appendix~E.2.

\textbf{Impact of Task Direction Projection and Multi-Stage Decomposition.}
Ablating either Task Direction Projection (TDP) or Multi-Stage Decomposition preserves final success but significantly slows convergence (Figure~\ref{fig:TDP}), highlighting MARVL’s improved sample efficiency.

\textbf{Impact of Confidence-Thresholded Shaping.}
Removing Confidence-Thresholded Shaping and directly using raw VLM scores degrades both sample efficiency and training stability (Figure~\ref{fig:w/othr}). Thresholding accelerates convergence in Door Open and prevents performance collapse in Button Press, underscoring its role in suppressing spurious VLM signals.

\begin{figure}[t]
    \centering
    \includegraphics[width=\linewidth]{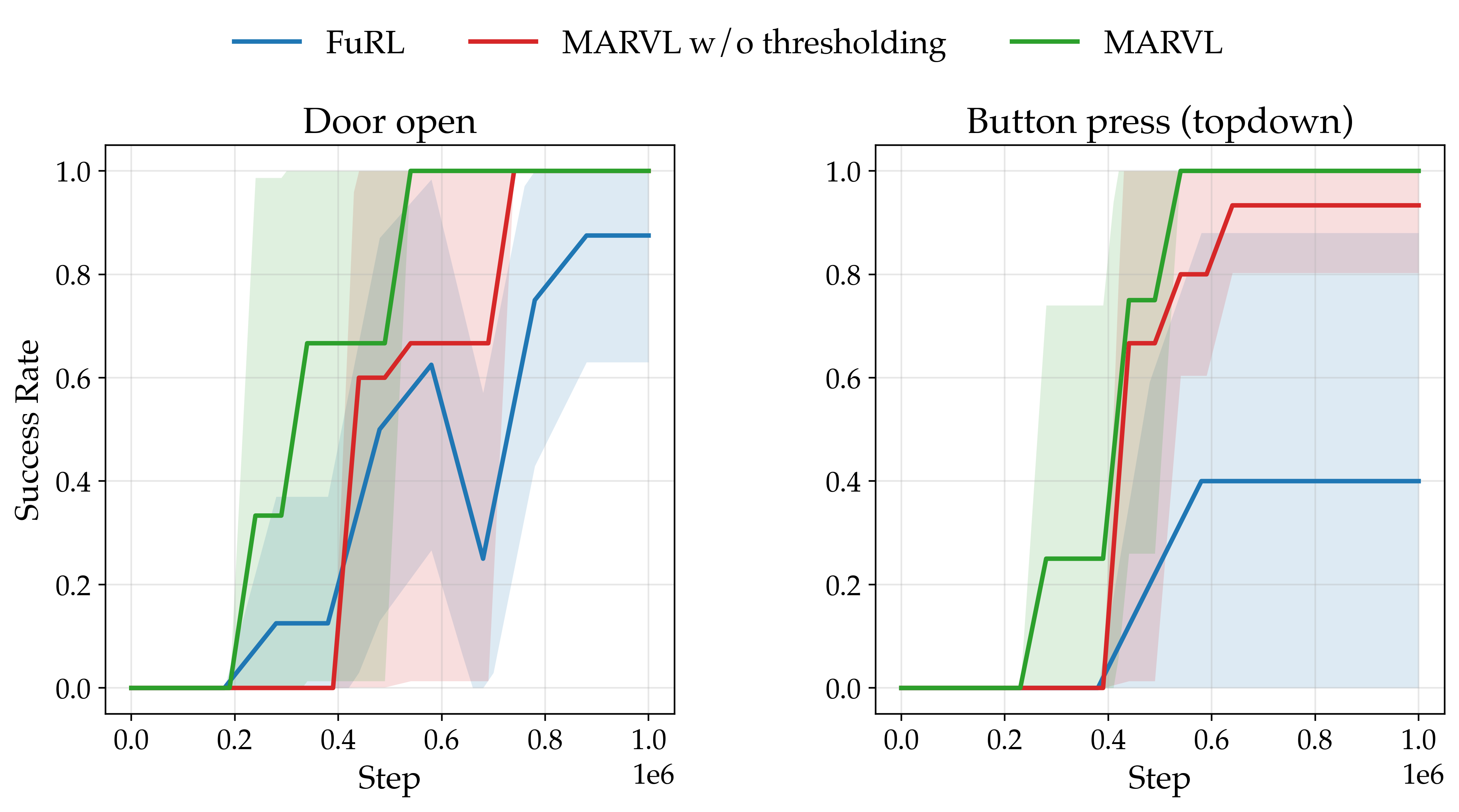}
    \caption{\textbf{Impact of Confidence-Thresholded Shaping}}
    \label{fig:w/othr}
        \centering
    \includegraphics[width=\linewidth]{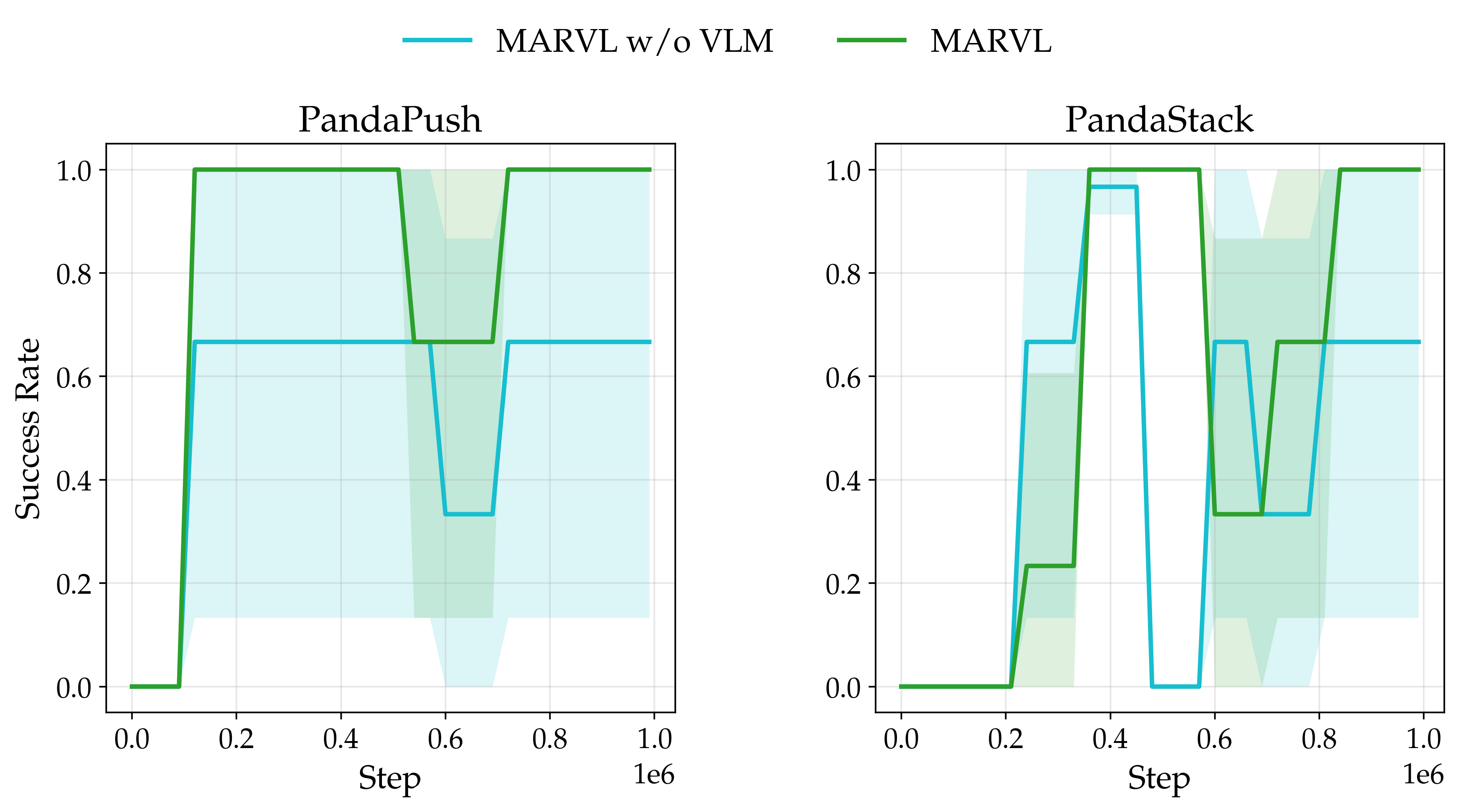}
    \caption{\textbf{Generalization to Novel Environments}}
    \label{fig:pandagym}
\end{figure}

\textbf{Generalization to Novel Environments.}
We evaluate cross-domain generalization to examine whether the proposed VLM fine-tuning improves transferable spatial grounding, rather than overfitting to Meta-World-specific visual statistics.
To this end, we deploy MARVL on the Panda-Gym benchmark, which differs substantially from Meta-World in both robot embodiment and visual appearance.
As shown in Figure~\ref{fig:pandagym}, MARVL maintains high success rates without any target-domain adaptation, while the variant using a non-fine-tuned VLM exhibits a severe performance degradation.
This result suggests that the fine-tuning procedure does not merely adapt the model to the training domain, but instead enhances its ability to capture transferable spatial semantics that generalize across environments and embodiments.

\textbf{Robustness to Diverse Camera Configurations.}
We assess robustness to viewpoint changes by evaluating MARVL under three distinct camera configurations. As shown in Figure~\ref{fig:camera}, performance remains consistent across views for both Button Press and Drawer Open tasks, with all settings converging to high success rates despite minor variance during early training. This suggests that MARVL is invariant to viewpoint-specific visual cues.

\textbf{Generalization across RL Backbones.}
To test algorithmic robustness, we replace the SAC backbone with TD3~\cite{fujimoto2018td3} while keeping the reward shaping unchanged. As shown in Figure~\ref{fig:td3}, MARVL maintains strong sample efficiency and substantially outperforms sparse-reward baselines, indicating that MARVL functions as an algorithm-agnostic reward interface across off-policy RL methods.

\begin{figure}[t]
    \centering
    \includegraphics[width=\linewidth]{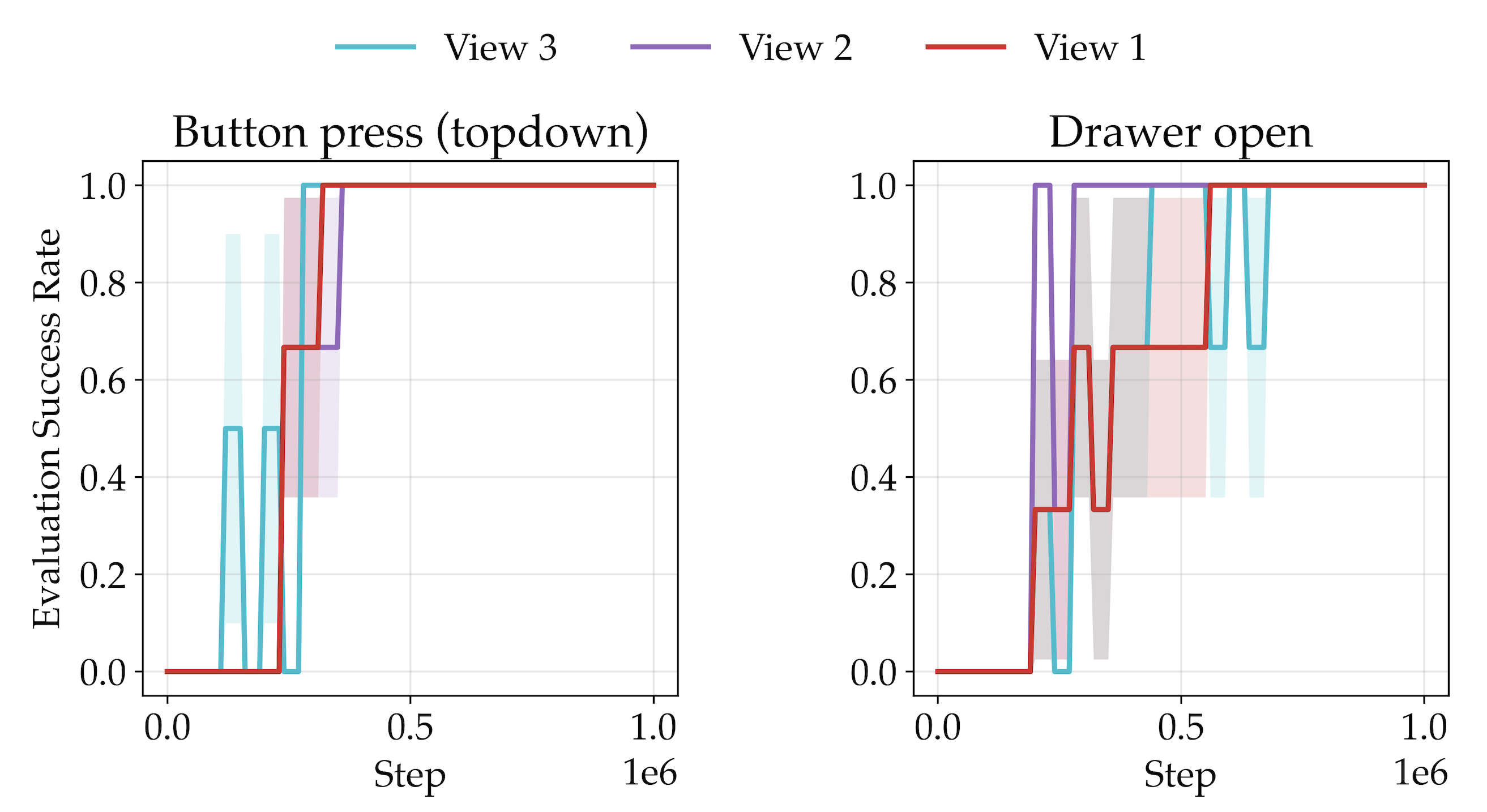}
    \caption{\textbf{Generalization to Camera Settings}}
    \label{fig:camera}
    \vspace{1.5mm}
        \centering
    \includegraphics[width=\linewidth]{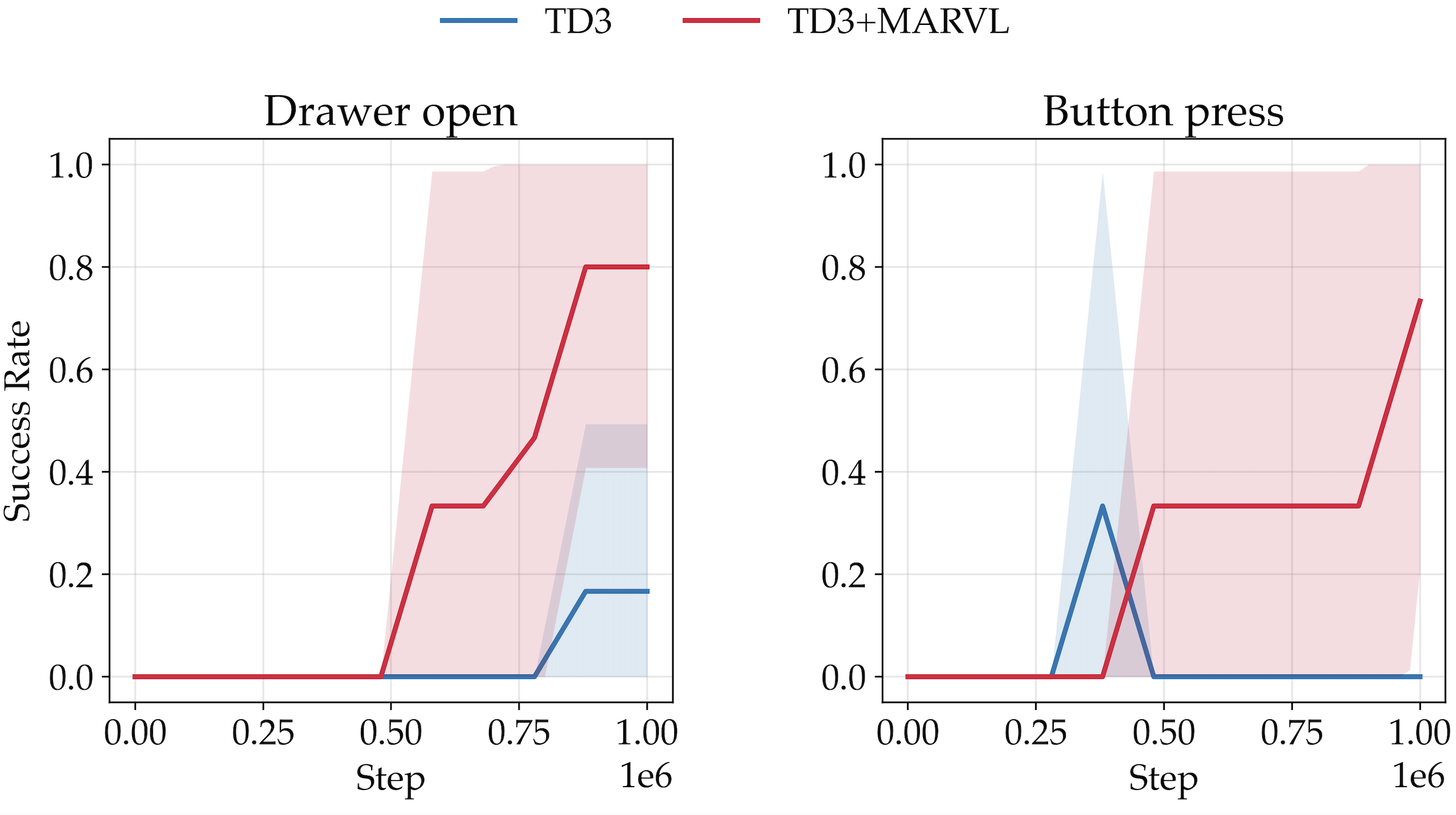}
    \caption{\textbf{Generalization across RL Backbones}}
    \label{fig:td3}
\end{figure}
\section{Conclusions}
This work shows that VLM-based rewards can become substantially more effective when their representations are explicitly grounded and structured. By integrating Scene-View Decomposition, Task Direction Projection, and Confidence-Thresholded Shaping, MARVL transforms raw VLM similarities into stable, progress-sensitive reward signals, leading to improved sample efficiency and robustness across diverse robotic manipulation tasks. These results indicate that the limitations of prior VLM-based rewards stem less from insufficient semantic knowledge and more from a lack of structural alignment.

Despite its strong performance, MARVL has several limitations that suggest directions for future work. The current framework relies on lightweight VLM fine-tuning and is validated in simulation, and extending it to strictly zero-shot, real-world deployment remains an important objective. In addition, scaling MARVL to longer-horizon and multi-object manipulation tasks is a promising direction. Overall, MARVL demonstrates that modest adaptation and principled structuring can substantially improve the practicality of VLM-based reinforcement learning.

\bibliography{example_paper}
\bibliographystyle{icml2026}

\newpage
\appendix
\onecolumn
\section{Broader Impact and Ethical Considerations}
\textbf{Societal Impact and Democratization.} MARVL significantly lowers the technical barrier for robotic programming by replacing complex reward engineering with natural language instructions. Such democratization facilitates the deployment of intelligent assistants in unstructured environments (e.g., household service, elderly care), potentially alleviating labor shortages in critical sectors. However, as with broadly applicable advancements in automation, widespread adoption necessitates ongoing discourse regarding workforce adaptation to mitigate potential displacement in manual labor markets.

\textbf{Safety and Ethical Considerations.} The safety of our system is intrinsically bound to the reliability of the underlying Vision-Language Models (VLMs). Since MARVL leverages pre-trained encoders, it may inherit perceptual hallucinations where the model is affected by yield false-positive success signals, posing risks in safety-critical physical environments. Consequently, real-world deployment must be accompanied by robust safety constraints and human-in-the-loop verification. Furthermore, foundation models often contain inherent biases regarding cultural contexts or object typologies, these must be rigorously audited to prevent performance disparities across diverse visual settings.
\section{Experimental Setup}
\subsection{Meta-World and Panda-Gym Benchmark}

We employ two established robotic manipulation and reinforcement learning benchmarks: \textbf{Meta-World} \cite{yu2020meta} and \textbf{Panda-Gym} \cite{gallouedec2021panda} to evaluate the generalizability of MARVL across diverse task types and embodiments. \textbf{Meta-World}, built on MuJoCo \cite{todorov2012mujoco}, features a 7-DoF Sawyer arm with a 4-dimensional continuous action space. We select 8 representative tasks (hidden-goal) to investigate complex manipulation capabilities, specifically focusing on spatial positioning, fine-grained interaction and articulated object manipulation. Furthermore, \textbf{Panda-Gym}, based on PyBullet, employs the Franka Emika Panda robot (7-DoF), which presents significant differences in kinematics, geometrical appearance, and rendering style compared to the Sawyer arm used in Meta-World. We use it to assess robustness against visual domain shifts and novel embodiments. 

\subsection{Hyperparameters}
Table ~\ref{tab:hyperparameters} outlines the key hyperparameters for the backbone algorithm, MARVL, the baseline methods followed from prior work, Table ~\ref{tab:finetune} details the specific settings for VLM fine-tuning. To ensure a fair comparison, We adopt the FuRL codebase \url{https://github.com/fuyw/FuRL} and adhere to the original baseline implementations and hyperparameters. 

\subsection{Multi-Stage Decomposition and Automation}



Figure~\ref{fig:multistage} illustrates the multi-stage text descriptions (with ``robot arm'' as the baseline text) and the corresponding target images used in our experiments. 
While these intermediate goals can be specified manually for controlled comparison, the proposed multi-stage decomposition does not rely on human supervision.
As shown in Figure~\ref{fig:ai}, the entire pipeline can be fully automated using an AI-based generation process, without requiring expert demonstrations or manually predefined task stages.
Specifically, we employ Qwen-3-Flash to rewrite a task instruction into sequential sub-goals, and Qwen-ImageEdit-Plus to synthesize the corresponding visual targets.

Beyond the Button Press task shown in Figure~\ref{fig:ai}, we further evaluate the fully automated pipeline across a broader set of Meta-World environments.
The success rate curves are summarized in Figure~\ref{fig:ai_compa}, where MARVL with AI-labeled intermediate goals consistently achieves competitive performance across diverse manipulation tasks.
These results indicate that the proposed automation strategy generalizes beyond a single task and can serve as a viable substitute for manually specified sub-goals in practice.

\textbf{Importantly, this decomposition introduces structural guidance rather than additional semantic supervision}: consistent with the basic VLM reward approach, both the human-labeled and AI-labeled variants are derived from the same original task instruction, differing only in how the instruction is restructured into intermediate goals.
Counter-intuitively, our experimental results show that the AI-labeled pipeline achieves even faster convergence and superior asymptotic performance compared to the human-labeled baseline constructed from oracle simulation frames.
We hypothesize that this advantage arises because AI-generated goal images are synthesized directly from the textual sub-goals, and are therefore naturally aligned with the language instructions by construction, avoiding the potential semantic ambiguity or incidental visual details present in human-selected frames.

Notably, we observe that the AI-labeled variant exhibits higher variance in training dynamics compared to its human-labeled counterpart.
We attribute this increased fluctuation primarily to the quality and consistency of the generated goal images: unlike oracle frames sampled from the simulator, AI-synthesized images may occasionally contain visual artifacts or subtle geometric inconsistencies, which introduce noise into the reward signal.
Despite this variability, the overall performance remains robust, suggesting that MARVL is resilient to moderate imperfections in intermediate goal specification.

Despite these promising results, we emphasize that the effectiveness of the automated pipeline is ultimately bounded by the quality of the generative models.
Severe visual hallucinations or physically implausible generations could mislead the reward signal and degrade policy learning.
Ensuring the physical plausibility and controllability of AI-generated intermediate goals thus remains an important direction for future work.

\subsection{Computational Resources}
All RL experiments (including baselines and MARVL) were conducted on a workstation equipped with NVIDIA GeForce RTX 3090 GPU and Dual Intel Xeon Silver 4210R CPUs (totaling 20 physical cores and 40 threads) , while the VLM fine-tuning was performed on NVIDIA A100 GPU with 28 CPU cores and 256GB RAM. Regarding runtime, the Scene-View Decomposition fine-tuning requires approximately 2 hours as a one-time cost, MARVL averages 1 hour per task (1M environment steps).

\begin{figure}[ht!]
  \centering
  \includegraphics[width=\linewidth]{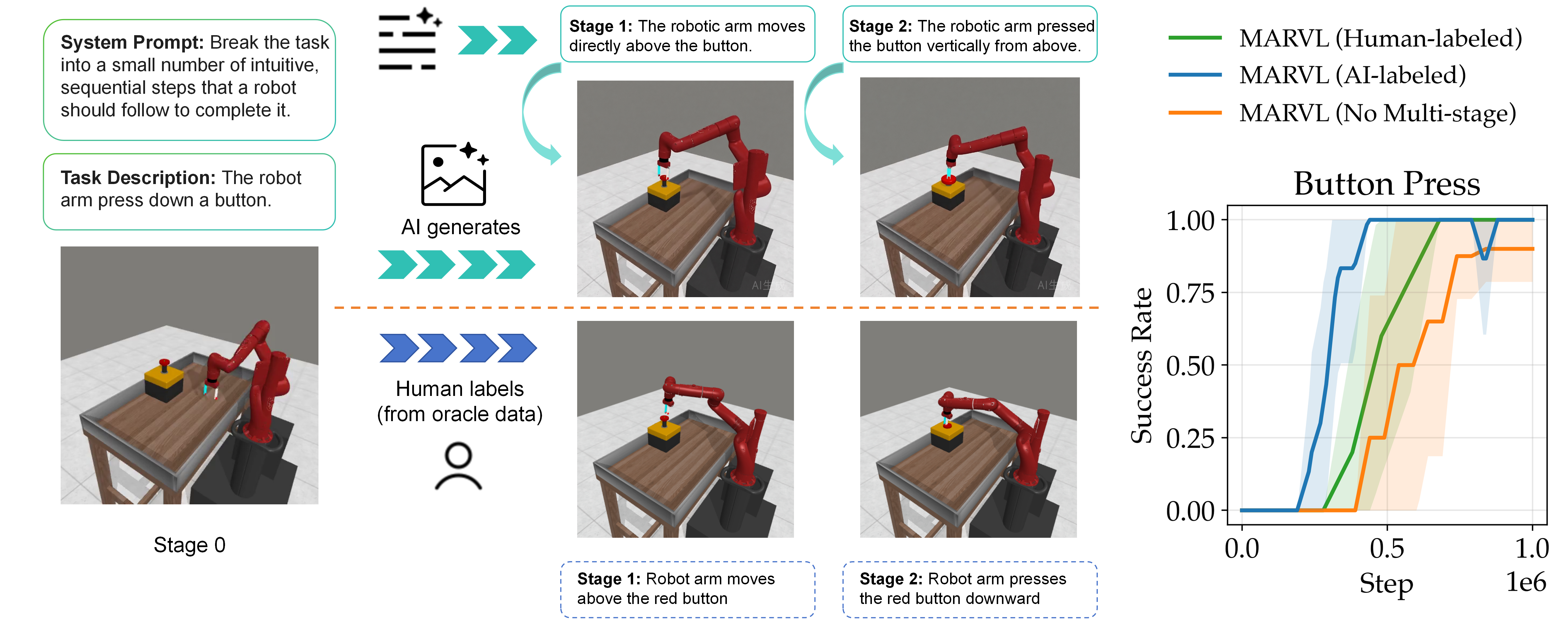}
  \caption{\textbf{Automated Sub-goal Generation Pipeline and Performance Comparison.} \textbf{Left:} The AI pipeline utilizes an LLM to decompose task instructions into textual sub-stages, which are then visualized by an image generation model to serve as intermediate visual goals. \textbf{Right:} Success rates on the Button Press task demonstrate that MARVL with AI-labeled goals (blue) achieves comparable or even superior performance to human-labeled goals (green), verifying the scalability and effectiveness of our generative pipeline.}
  \label{fig:ai}
\end{figure}

\begin{figure}
    \centering
    \includegraphics[width=\textwidth]{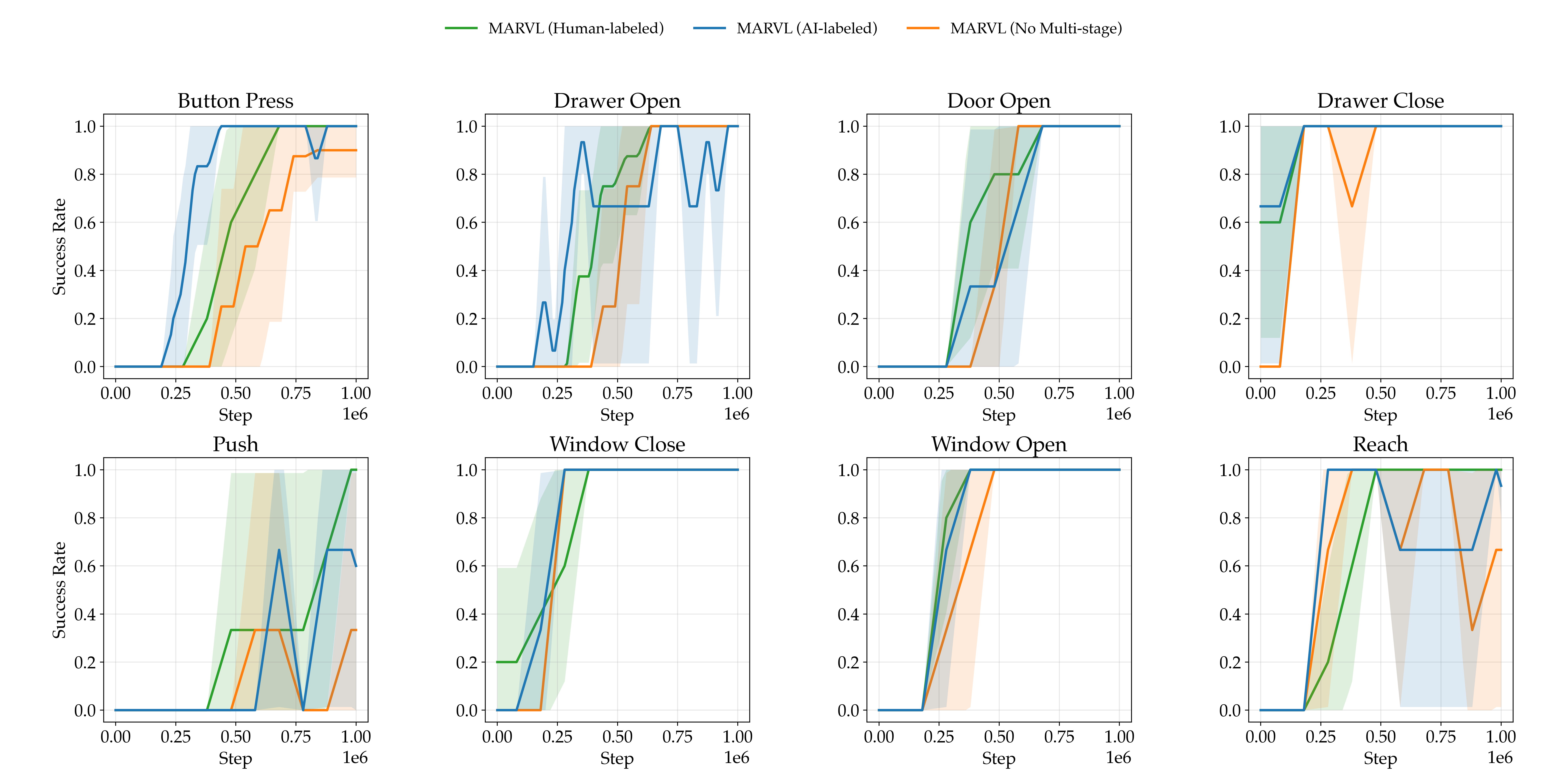}
    \caption{\textbf{Further Experiments on the Effectiveness of the Automated Sub-goal Generation Pipeline.}}
    \label{fig:ai_compa}
\end{figure}

\section{Fine-tuning Implementation Details}

\subsection{Data Collection}
To construct multi-view paired data for scene--view disentanglement, we uniformly sample frames from random trajectories across tasks at different time steps.
Using the MuJoCo interface, we programmatically perturb camera parameters in Meta-World to render each sampled scene from multiple viewpoints, while automatically recording the corresponding scene identity (Scene ID) and camera configuration (View ID).
Importantly, this procedure yields a lightweight dataset of \textbf{500 images}, which are randomly shuffled and recombined to form training pairs during fine-tuning.
No additional interaction, expert demonstrations, or target-domain data are required.
Figure~\ref{fig:robot_multi_view} illustrates representative multi-view examples from the collected dataset.

\subsection{Architecture}
The Scene Encoder $E_s$, View Encoder $E_v$, and Text Encoder are initialized with pre-trained CLIP ViT-B/32 \cite{radford2021clip} weights from laion2b\_s34b\_b79k \cite{schuhmann2022laion}. They accept normalized $224 \times 224$ RGB images or task text descriptions as input, yielding 512-dimensional feature vectors.The Decoder is a lightweight network (approx. 10\% of encoder parameters) based on transposed convolutions. It concatenates scene and view embeddings, followed by a projection layer and five transposed convolution blocks with Batch Normalization and CBAM \cite{woo2018cbam} attention  to reconstruct $224 \times 224$ RGB images. Detailed layer specifications are provided in Table \ref{tab:finetune}.

\begin{figure}[t!]
  \centering
  \includegraphics[width=\linewidth]{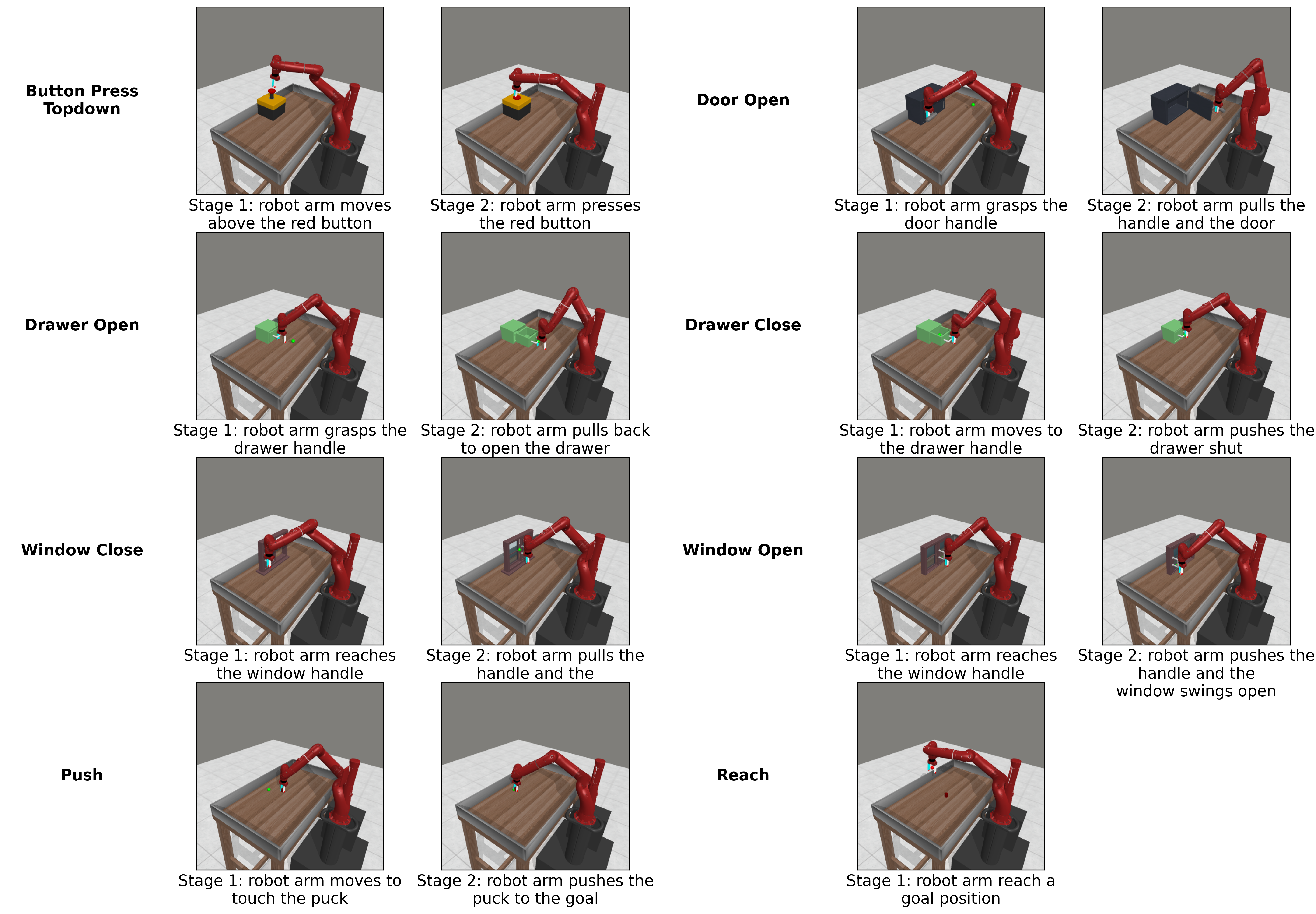}
  \caption{\textbf{Multi-Stage Target Image and Text Instruction}}
  \label{fig:multistage}
\end{figure}

\subsection{Process and Loss function}
The fine-tuning process proceeds in three substages. \textbf{Stage 1:} Both encoders remain frozen, and the decoder is optimized solely via $\mathcal{L}_{recon}$. \textbf{Stage 2:} We unfreeze the last four Transformer blocks of the ViT backbone and train using the full loss function. \textbf{Stage 3:} We progressively increase $\lambda_{clip}$ to ensure encoders retains alignment with the original CLIP text embedding space.

We provide a detailed formulation of the fine-tuning loss function below. Given a batch of sampled pairs of scenes $\{s_i, s_j\}$ and viewpoints $\{v_m, v_n\}$, Let $o_{s_i}^{v_m}$ denote an observation image of Scene $i$ captured from View $m$. The scene embedding is denoted as $z_s^{(i,m)} = E_s(o_{s_i}^{v_m})$ and the view embedding is denoted as $z_v^{(i,m)} = E_v(o_{s_i}^{v_m})$. The Decoder $D(z_s, z_v)$ reconstructs an image $\hat{o}_{s_i}^{v_k}$ from these embeddings.
$$\mathcal{L} = \mathcal{L}_{recon} + \lambda_{shuffle} \mathcal{L}_{shuffle} + \lambda_{consistency} \mathcal{L}_{consistency} + \lambda_{clip} \mathcal{L}_{clip}$$
\textbf{Reconstruction Loss $\mathcal{L}_{recon}$} To ensure feature completeness, 
we require the decoder $D$ to reconstruct the original image $o_{s_i}^{v_m}$ by combining the scene embedding $z_{s_i}^{(i,n)}$ (different view $n$) with the view embedding $z_{v_m}^{(j,m)}$ (different scene $j$), thereby forcing the model to learn scene-view decomposition by cross-reconstruction.
$$\hat{o}_{s_i}^{v_m} = D(z_{s_i}^{(i,n)}, z_{v_m}^{(j,m)}) $$
$$\mathcal{L}_{recon} =  \| \hat{o} - o \|_1 + \lambda_{lpips} \mathcal{L}_{LPIPS}(\hat{o}, o) $$where $\mathcal{L}_{LPIPS}$ denotes the Learned Perceptual Image Patch Similarity metric \cite{zhang2018unreasonable}, employed to capture high-frequency perceptual details.
\begin{figure}[ht] 
    \centering       
    \includegraphics[width=\linewidth]{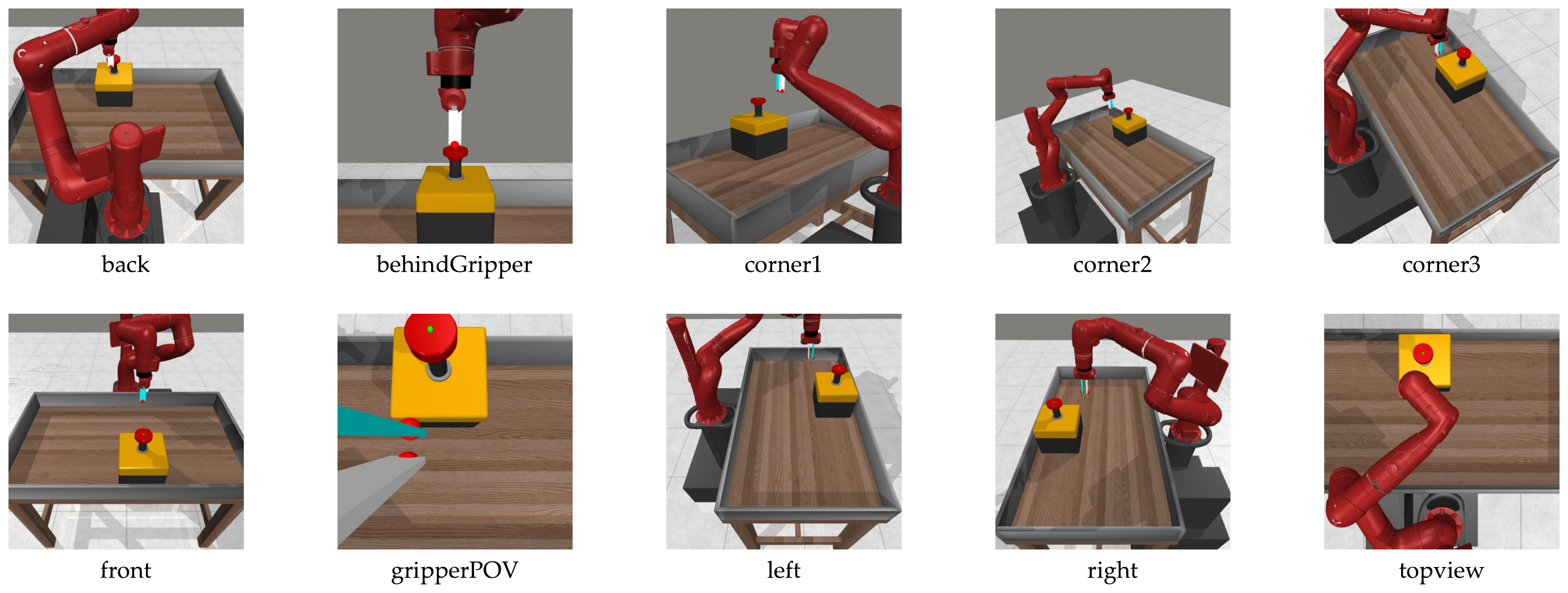} 
    \caption{\textbf{Multi-View Robotic Arm Image Examples in Button-Press Task}}
    \label{fig:robot_multi_view} 
\end{figure}

\textbf{Disentanglement Shuffle Loss $\mathcal{L}_{shuffle}$} This loss explicitly enforces orthogonality between $E_s$ and $E_v$ via latent variable swapping:
\begin{itemize}[leftmargin=*, itemsep=0pt]
    \item \textbf{Scene Invariance:} Swapping scene embeddings between different views of the same scene ($z_{s_i}^{(i,m)} \leftrightarrow z_{s_i}^{(i,n)}$) should yield invariant reconstructions, as the view code remains fixed.
    \item \textbf{View Invariance:} Swapping view embeddings between different scenes under the same viewpoint ($z_{v_m}^{(i,m)} \leftrightarrow z_{v_m}^{(j,m)}$) should similarly preserve the reconstruction, assuming the view code captures only perspective information.
\end{itemize}
Define $\hat{o}_{base} = D(z_{s_i}^{(i,m)}, z_{v_m}^{(i,m)})$ as the baseline reconstruction. The shuffle loss is defined as:$$\mathcal{L}_{shuffle} = \underbrace{\| D(z_{s_i}^{(i,n)}, z_{v_m}^{(i,m)}) - \hat{o}_{base} \|_2^2}_{\text{Scene Swap}} + \underbrace{\| D(z_{s_i}^{(i,m)}, z_{v_m}^{(j,m)}) - \hat{o}_{base} \|_2^2}_{\text{View Swap}}$$
Notably, we align the reconstruction $\hat{o}_{base}$ rather than the ground-truth image $o_{s_i}^{v_m}$. This design choice isolates the disentanglement error from the decoder's inherent reconstruction imperfections.

\textbf{Feature Consistency Loss $\mathcal{L}_{consistency}$ } We impose direct constraints within the feature space to ensure that scene embeddings are insensitive to viewpoint shifts and view embeddings are insensitive to scene content variations:$$\mathcal{L}_{consistency} = \underbrace{\left( 1 - \cos(z_{s_i}^{(i,m)}, z_{s_i}^{(i,n)}) \right)}_{\text{Scene Consistency}} + \underbrace{\left( 1 - \cos(z_{v_m}^{(i,m)}, z_{v_m}^{(j,m)}) \right)}_{\text{View Consistency}}$$

\textbf{Semantic Preservation Loss $\mathcal{L}_{clip}$ } To prevent catastrophic forgetting and preserve the pre-trained semantic alignment of the VLM, we employ an InfoNCE \cite{oord2018representation} contrastive loss. This objective aligns the fine-tuned scene embedding $z_s$ with its corresponding textual instruction embedding $t_{i}$ while separating it from irrelevant texts:$$\mathcal{L}_{clip} = -\log \frac{\exp(z_{s_i} \cdot t_{i} / \tau)}{\sum_{k} \exp(z_{s_i} \cdot t_{k} / \tau)}$$

Collectively, these terms establish a synergistic optimization landscape: while $\mathcal{L}_{recon}$ ensures the latent representations retain sufficient information for physical state recovery, $\mathcal{L}_{shuffle}$ and $\mathcal{L}_{consistency}$ act as structural regularizers that aggressively penalize the leakage of view nuisance factors into the scene embedding. Finally, $\mathcal{L}_{clip}$ serves as a semantic anchor, preventing the fine-tuned manifold from drifting away from the pre-trained linguistic alignment essential for instruction following.
\section{Theoretical Analysis of Task Direction Projection}
This section provides a rigorous theoretical analysis of the \textbf{Task Direction Projection (TDP) } mechanism introduced in Section 3.2. We model the embedding geometry to demonstrate that the proposed projection operator significantly enhances the Signal-to-Noise Ratio (SNR) and restores the monotonicity of the reward signal with respect to task progress. The mathematical symbols used in the following text follow the usage in the main body.


\begin{assumption}
\label{ass}
Define the scalar projected coordinate
\[
c_t = \frac{\langle e_{o_t}-e_{o_{\mathrm{start}}},\, d_{\mathrm{img}}\rangle}{\|d_{\mathrm{img}}\|^2}.
\]
We posit that within a sub-stage, $c_t$ serves as a monotonic proxy for the latent task progress $\lambda_t$ in expectation. Formally, there exists an increasing function $h$ such that
\[
\mathbb{E}[c_t \mid \lambda_t] = h(\lambda_t), \qquad h'(\cdot)>0.
\]
\textbf{Remark.} It is worth noting that for complex tasks, relying solely on a single terminal image and task description often fails to accurately characterize progress, potentially violating this monotonicity assumption (e.g., necessary intermediate states may appear visually or semantically distant from the final goal). To address this limitation and ensure the validity, we decompose the global task into sequential sub-stages.
\end{assumption}


\begin{theorem}
\label{the2}
    For any observation $e_{o_t}$, the projection operator $P_{d_{\text{img}}}$ preserves the task-aligned component while linearly attenuating the orthogonal nuisance component by a factor of $(1-\alpha)$. Consequently, this mechanism enhances the Signal-to-Noise Ratio (SNR) of the embedding by a factor of $(1-\alpha)^{-2}$.
\end{theorem}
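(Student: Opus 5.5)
We decompose the centered embedding along the task direction and compute how the projection operator acts on each piece. Write $\mathbf{x}_t = e_{o_t} - e_{o_{\mathrm{start}}}$ and $\hat{\mathbf{d}} = d_{\mathrm{img}}/\lVert d_{\mathrm{img}}\rVert$. Split
\[
\mathbf{x}_t = \mathbf{s}_t + \mathbf{n}_t, \qquad \mathbf{s}_t = c_t\, d_{\mathrm{img}} = \langle \mathbf{x}_t,\hat{\mathbf{d}}\rangle \hat{\mathbf{d}}, \qquad \mathbf{n}_t = (\mathbf{I}-\hat{\mathbf{d}}\hat{\mathbf{d}}^\top)\mathbf{x}_t .
\]
Here $c_t$ is the scalar coordinate of Assumption~\ref{ass}. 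The vector $\mathbf{s}_t$ is the task-aligned component, whose expectation is $h(\lambda_t)\,d_{\mathrm{img}}$, so it carries the progress signal. The vector $\mathbf{n}_t$ is the orthogonal nuisance component. Set $\Pi = \hat{\mathbf{d}}\hat{\mathbf{d}}^\top$, so that $P_{d_{\mathrm{img}}} = \alpha \Pi + (1-\alpha)\mathbf{I}$.

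\textbf{Step 1: action of the operator on each component.} Since $\Pi$ is an orthogonal projector, $\Pi\mathbf{s}_t = \mathbf{s}_t$ and $\Pi\mathbf{n}_t = 0$. Therefore
\[
P_{d_{\mathrm{img}}}\mathbf{s}_t = \alpha\mathbf{s}_t + (1-\alpha)\mathbf{s}_t = \mathbf{s}_t, \qquad P_{d_{\mathrm{img}}}\mathbf{n}_t = (1-\alpha)\mathbf{n}_t .
\]
By linearity, $P_{d_{\mathrm{img}}}\mathbf{x}_t = \mathbf{s}_t + (1-\alpha)\mathbf{n}_t$. Adding back $e_{o_{\mathrm{start}}}$ recovers $e'_{o_t}$. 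The translation is common to all observations, so it does not affect the decomposition. This proves the first claim: the aligned component is preserved and the orthogonal component is scaled by $(1-\alpha)$. This is an eigen-decomposition: $P_{d_{\mathrm{img}}}$ has eigenvalue $1$ on $\mathrm{span}(d_{\mathrm{img}})$ and eigenvalue $1-\alpha$ on its orthogonal complement.

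\textbf{Step 2: SNR.} We define the signal-to-noise ratio as a ratio of energies:
\[
\mathrm{SNR}(\mathbf{x}_t) = \frac{\mathbb{E}\lVert \mathbf{s}_t\rVert^2}{\mathbb{E}\lVert \mathbf{n}_t\rVert^2}.
\]
One could also use variance in place of second moment, or per-coordinate quantities; the factor is unchanged because the same scalar multiplies the whole noise subspace. After projection the signal energy is unchanged. The noise energy becomes $\mathbb{E}\lVert(1-\alpha)\mathbf{n}_t\rVert^2 = (1-\alpha)^2\,\mathbb{E}\lVert\mathbf{n}_t\rVert^2$. Hence
\[
\mathrm{SNR}(P_{d_{\mathrm{img}}}\mathbf{x}_t) = (1-\alpha)^{-2}\,\mathrm{SNR}(\mathbf{x}_t)
\]
for $\alpha<1$, and the SNR is infinite when $\alpha=1$.

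\textbf{Main obstacle.} The difficulty is definitional rather than technical. The statement is only meaningful once we fix what counts as ``signal'' and ``noise''. The proof therefore needs to state explicitly that noise is the component of $\mathbf{x}_t$ orthogonal to $d_{\mathrm{img}}$.

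A natural objection is that some genuine progress information might lie off-axis. Assumption~\ref{ass} addresses this: it guarantees that the monotone progress proxy is captured entirely by $c_t$. All orthogonal variation can therefore be treated as nuisance for the purpose of the reward, and the squared factor $(1-\alpha)^{-2}$ follows because SNR is measured in energy rather than amplitude.

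We will also remark that the cosine reward $\tilde r_t$ is computed on these filtered embeddings. The later monotonicity argument can thus reuse the fact that the projected embedding's deviation from the start-goal line shrinks by $(1-\alpha)$.
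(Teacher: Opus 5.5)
Your proposal is correct and follows the paper's proof essentially step for step. It uses the same split of $e_{o_t}-e_{o_{\mathrm{start}}}$ into $c_t d_{\mathrm{img}}$ plus an orthogonal residual, and the same facts $\Pi_{d_{\mathrm{img}}} d_{\mathrm{img}} = d_{\mathrm{img}}$ and $\Pi_{d_{\mathrm{img}}}\epsilon_t = 0$ to obtain $e'_{o_t} = e_{o_{\mathrm{start}}} + c_t d_{\mathrm{img}} + (1-\alpha)\epsilon_t$. It also uses the same energy-ratio definition of SNR to get the $(1-\alpha)^{-2}$ gain.

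Your explicit SNR ratio, with the unchanged signal energy in the numerator, and your eigenvalue remark make the paper's ``raw SNR is proportional to $1/\mathbb{E}[\|\epsilon_t\|^2]$'' step slightly more transparent. However, your appeal to Assumption~\ref{ass} to argue that no progress information lies off-axis claims more than that assumption states. It is also unnecessary, since the theorem simply defines the orthogonal component as noise.
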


\begin{proof}
    Let the relative displacement be $\Delta e_t = e_{o_t} - e_{o_{\text{start}}}$. We decompose $\Delta e_t$ into a component along $d_{\text{img}}$ and an orthogonal residual noise $\epsilon_t$:
$$\Delta e_t = c_t d_{\text{img}} + \epsilon_t, \quad \text{where } c_t = \frac{\langle \Delta e_t, d_{\text{img}} \rangle}{\|d_{\text{img}}\|_2^2} \text{ and } \epsilon_t \perp d_{\text{img}}. $$
Substituting into the projection definition, defining 
$$P_d(x) = \left( \alpha \Pi_d + (1-\alpha)I \right)x, \quad \text{where } \Pi_d = \frac{dd^\top}{\|d\|_2^2}$$
and using the properties $\Pi_{d_{\text{img}}}(d_{\text{img}}) = d_{\text{img}}$ and $\Pi_{d_{\text{img}}}(\epsilon_t) = 0$:

$$\begin{aligned}
e'_{o_t} &= P_{d_{\text{img}}}(c_t d_{\text{img}} + \epsilon_t) + e_{o_{\text{start}}} \\
&= \left[ \alpha \Pi_{d_{\text{img}}}(c_t d_{\text{img}} + \epsilon_t) + (1-\alpha)(c_t d_{\text{img}} + \epsilon_t) \right] + e_{o_{\text{start}}} \\
&= \left[ \alpha c_t d_{\text{img}} + (1-\alpha)c_t d_{\text{img}} + (1-\alpha)\epsilon_t \right] + e_{o_{\text{start}}} \\
&= e_{o_{\text{start}}} + c_t d_{\text{img}} + (1-\alpha)\epsilon_t.
\end{aligned}$$

This result demonstrates that the signal component $c_t d_{\text{img}}$ remains invariant, while the noise vector is scaled by $(1-\alpha)$.
Defining the SNR as the ratio of signal energy to noise energy, the raw SNR is proportional to $1/\mathbb{E}[\|\epsilon_t\|^2]$. In the projected space, the noise energy becomes $\mathbb{E}[\|(1-\alpha)\epsilon_t\|^2] = (1-\alpha)^2 \mathbb{E}[\|\epsilon_t\|^2]$. Thus, the SNR gain is given by the ratio $\frac{1}{(1-\alpha)^2}$, which grows significantly as $\alpha \to 1$.
\end{proof}

\textbf{Remark.} It is well-established that CLIP embeddings are $L_2$-normalized during training, constraining them to a unit hypersphere $\mathbb{S}^{n-1} = \{ e \in \mathbb{R}^d \mid \|e\|_2 = 1 \}$. While the TDP operator $P_{d_{\text{img}}}$ is a linear projection in $\mathbb{R}^d$ that may map the resulting embedding $e'_{o_t}$ off the hypersphere ($\|e'_{o_t}\|_2 \neq 1$), the TDP mechanism remains geometrically consistent with the VLM's inductive bias for two reasons:\begin{enumerate}\item \textbf{Scale Invariance of Reward:} The cosine similarity $\cos(A, B) = \frac{\langle A, B \rangle}{\|A\| \|B\|}$ is a homogeneous function of degree zero. Formally, for any scalar $\kappa > 0$, $\cos(\kappa A, B) = \cos(A, B)$. Therefore, the norm-changing effect of the projection does not shift the reward value; only the directional alignment is preserved and denoised.\item \textbf{Tangent Space Approximation:} In high-dimensional manifolds, for a localized sub-stage, the geodesic distance on the hypersphere is well-approximated by the Euclidean distance in the tangent space at $e_{o_{\mathrm{start}}}$. TDP can be viewed as performing directional filtering in this local tangent space before the implicit re-normalization performed by the cosine similarity.\end{enumerate}This ensures that TDP effectively filters noise in the embedding space while respecting the spherical topology of the VLM's representation.



\begin{theorem}
\label{the3}
    While raw cosine similarity is susceptible to non-monotonicity induced by noise norm fluctuations, under the condition that the visual task direction aligns sufficiently with the textual instruction in the embedding space, TDP ensures reward $\tilde{r}$ has positive gradients with respect to projected coordinate $c_t$, which serves as a monotonic proxy for the latent task progress $\lambda_t$.
\end{theorem}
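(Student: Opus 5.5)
The plan is to take the decomposition from the proof of Theorem~\ref{the2} as the starting point. We write $e'_{o_t} = e_{o_{\mathrm{start}}} + c_t d_{\mathrm{img}} + (1-\alpha)\epsilon_t$ with $\epsilon_t \perp d_{\mathrm{img}}$. We fix the text target $u := e'_{l_k}$, which is independent of $t$, and study the reward
\[
\tilde r(c_t) = \frac{\langle e'_{o_t}, u\rangle}{\|e'_{o_t}\|\,\|u\|}
\]
as a function of $c_t$. By the scale-invariance remark we may normalize $\|u\|=1$.

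We formalize the alignment hypothesis as follows. We decompose $u = \beta \hat d + u_\perp$, where $\hat d = d_{\mathrm{img}}/\|d_{\mathrm{img}}\|$ and $\beta = \langle u,\hat d\rangle$. The condition is that $\beta$ is large relative to $\|u_\perp\|$, i.e.\ $\cos(u,d_{\mathrm{img}})\ge \gamma$ for some threshold $\gamma$ to be determined. We also write $s := e_{o_{\mathrm{start}}}$.

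The core step is differentiation. Set $A(c) = \langle s,u\rangle + c\beta\|d\| + (1-\alpha)\langle \epsilon_t,u_\perp\rangle$ and $B(c)=\|s + c d + (1-\alpha)\epsilon_t\|$. Then
\[
\partial_c \tilde r = \frac{\beta\|d\|\,B^2 - A\,\langle s+cd+(1-\alpha)\epsilon_t,\, d\rangle}{B^3}.
\]
The numerator is exactly $\|d\|\,B^2$ times the component of $\hat d$ along $u$ minus its component along $e'_{o_t}$ scaled by $\tilde r$. Geometrically, it is $\|d\|B\,\langle u - \tilde r\,\hat e', \hat d\rangle$, where $\hat e' = e'_{o_t}/B$ is the unit vector along $e'_{o_t}$. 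Positivity therefore reduces to showing that the tangent-space component of $u$ at $\hat e'$ has positive inner product with $\hat d$.

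We then bound the numerator from below. The orthogonal noise enters only through $(1-\alpha)\epsilon_t$, both in $B$ and in $\langle e',d\rangle$, where it contributes only through $s$ and $c$ because $\epsilon_t\perp d$. We would show
\[
\text{numerator} \ \ge\ \|d\|\Big(\beta B^2 - |A|\,\big(|\langle s,\hat d\rangle| + c\|d\|\big)\Big).
\]
This is positive whenever $\beta$ exceeds a quantity controlled by $\|u_\perp\|$, $\|s\|$, the range of $c$ within a stage, and $(1-\alpha)\|\epsilon_t\|$. In particular, as $\alpha\to1$ the noise terms vanish, and the condition becomes the clean requirement that $u$ lies on the progress side of $\hat e'$ along $d$. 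This gives the precise form of "aligns sufficiently."

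For the contrast with raw cosine, we note that at $\alpha=0$ the $\|\epsilon_t\|$ term in $B$ can fluctuate independently of $c_t$ and flip the sign. Finally, monotonicity in $\lambda_t$ follows by composing with Assumption~\ref{ass}: $\mathbb{E}[c_t\mid\lambda_t]=h(\lambda_t)$ with $h'>0$, so the positive slope in $c_t$ transfers to $\lambda_t$ via the chain rule in expectation.

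The main obstacle is this last transfer, since $\tilde r$ is nonlinear and $\mathbb{E}[\tilde r(c_t)]\ne\tilde r(\mathbb{E}c_t)$. We would handle it by a first-order (small-noise, $\alpha$ near $1$) expansion of $\tilde r$ around $h(\lambda_t)$, or by stating the result pathwise in $c_t$ as the theorem literally requires.
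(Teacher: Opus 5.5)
Your plan is essentially the paper's proof: the paper also writes $e'_{o_t}=S(c_t)+N_t$ with $S(c_t)=e_{o_{\mathrm{start}}}+c_t d_{\mathrm{img}}$ and differentiates the cosine in $c_t$. It reduces positivity to $\cos(d_{\mathrm{img}},e'_l)>\cos(S(c_t),e'_l)\cos(S(c_t),d_{\mathrm{img}})$, which is your condition $\langle u-\tilde r\,\hat e',\hat d\rangle>0$ at $N_t=0$, and then appeals to Assumption~\ref{ass}; the only differences are that the paper passes to $\alpha\to1$ instead of keeping $(1-\alpha)\epsilon_t$, asserts the angular condition ``for the expected VLM'' rather than bounding it, and ignores the $\mathbb{E}[\tilde r(c_t)]\neq\tilde r(\mathbb{E}c_t)$ issue you correctly flag. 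The paper is more explicit on the raw-cosine contrast, computing $\nabla_{N_t}\tilde r_t=\|e'_{o_t}\|^{-1}(\hat e_l-\tilde r_t\hat e_{o_t})$ and, for $dN_t=\gamma\hat N_t$, an interference term $\approx-\gamma\tilde r_t\|N_t\|/\|e'_{o_t}\|^2$ in the \emph{total} differential; the sign flip has to live there, because in your fixed-$\epsilon_t$ formula the noise norm enters $B^2$ with the positive coefficient $\beta\|d\|$ and cannot by itself make $\partial_c\tilde r$ negative.
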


\begin{proof}
    we decompose the observation embedding into a deterministic signal  and random noise:$$e'_{o_t} = \underbrace{e_{o_{\text{start}}} + c_t d_{\text{img}}}_{S(c_t)} + \underbrace{(1-\alpha)\epsilon_t}_{N_t}$$

where $S(c_t)$ is the signal vector evolving with projected coordinate $c_t$, and $N_t$ is the residual noise. In the raw setting ($\alpha=0$), $N_t = \epsilon_t$; in the TDP setting ($\alpha \approx 1$), $N_t \to 0$.The reward function can be written as $$\tilde{r}_t(c_t, N_t) = \cos(e'_{o_t}, e'_l) = \frac{\langle S(c_t) + N_t, e'_l \rangle}{\|S(c_t) + N_t\| \|e'_l\|}$$
To rigorously analyze how noise disrupts the monotonicity of the reward signal, we consider the total change $d\tilde{r}_t$ with respect to time step $t$. When the agent makes infinitesimal task progress $dc_t > 0$, it is accompanied by a random perturbation of the noise vector $dN_t$:$$d\tilde{r}_t = \underbrace{\frac{\partial \tilde{r}_t}{\partial c_t} dc_t}_{\text{Signal Gain}} + \underbrace{\langle \nabla_{N_t} \tilde{r}_t, dN_t \rangle}_{\text{Noise Interference}}$$
$$\begin{aligned} \nabla_{N_t} \tilde{r}_t = \nabla_{e'_{o_t}} \tilde{r}_t &= \frac{\|e'_{o_t}\| \|e'_l\| \cdot e'_l - \langle e'_{o_t}, e'_l \rangle \cdot \|e'_l\| \frac{e'_{o_t}}{\|e'_{o_t}\|}}{\|e_{o_t}\|^2 \|e'_l\|^2} \\
&= \frac{1}{\|e'_{o_t}\|} \left( \frac{e'_l}{\|e'_l\|} - \tilde{r}_t \frac{e'_{o_t}}{\|e'_{o_t}\|} \right) \\
&= \frac{1}{\|e'_{o_t}\|} (\hat{e}_l - \tilde{r}_t \hat{e}_{o_t}) \end{aligned} $$

Analyze a common worst-case scenario: noise magnitude expansion. Assuming that at a certain moment, the direction of noise change aligns with the noise vector itself, i.e., $dN_t$ is parallel to $N_t$ (e.g., background clutter increases), denoted as $dN_t = \gamma \hat{N}_t$ where $\gamma > 0$. In this case, the noise interference term becomes: $$\text{Noise Interference} = \langle \nabla_{N_t} \tilde{r}_t, dN_t \rangle = \frac{\gamma}{\|e'_{o_t}\|} \left( \langle \hat{e}_l, \hat{N}_t \rangle - \tilde{r}_t \langle \hat{e}_{o_t}, \hat{N}_t \rangle \right)$$
In high-dimensional semantic spaces, the random noise vector $N_t$ is likely orthogonal to the fixed task goal $e_l$, $\langle \hat{e}_l, \hat{N}_t \rangle \approx 0$. For second term $\langle \hat{e}_{o_t}, \hat{N}_t \rangle$, expanding $\hat{e}_{o_t} = \frac{S(c_t) + N_t}{\|e'_{o_t}\|}$, we get:$$\langle \hat{e}_{o_t}, \hat{N}_t \rangle = \frac{\langle S(c_t), N_t \rangle + \|N_t\|^2}{\|e'_{o_t}\| \|N_t\|}$$Since the signal is orthogonal to the noise ($\langle S, N_t \rangle \approx 0$), this term approximates to $\frac{\|N_t\|}{\|e'_{o_t}\|}$, which is a strictly positive value. $$\text{Noise Interference} \approx \frac{\gamma}{\|e'_{o_t}\|} \left( 0 - \tilde{r}_t \frac{\|N_t\|}{\|e'_{o_t}\|} \right) = - \gamma \tilde{r}_t \frac{\|N_t\|}{\|e'_{o_t}\|^2}$$
When the noise magnitude increases ($\gamma > 0$) and is sufficiently large, the negative noise interference term dominates the positive signal gain term:$$\left| - \gamma \tilde{r}_t \frac{\|N_t\|}{\|e'_{o_t}\|^2} \right| > \left| \frac{\partial \tilde{r}_t}{\partial c_t} dc_t \right| \implies d\tilde{r}_t < 0$$This mathematical derivation rigorously proves that even if the agent approachs the goal in high-dimensional semantic space($dc_t > 0$), the raw reward signal can decrease. This is due to the reward function's sensitivity to the full noise vector $N_t$ in the denominator, where the negative gradient component (stemming from the $-\tilde{r}_t \hat{e}_{o_t}$ term) overwhelms the marginal signal improvement. This explains the severe oscillations and lack of monotonicity observed in Raw VLM rewards during experiments.

In TDP setting ($\alpha \to 1$), we have $\|N_t\| \to 0$ and $\mathrm{d}\|N_t\| \to 0$. Consequently, the reward function degenerates to a deterministic form:$$\lim_{\alpha \to 1} \tilde{r}_t = \frac{\langle S(c_t), e'_l \rangle}{\|S(c_t)\| \|e'_l\|} = \frac{\langle e_{o_{\text{start}}} + c_t d_{\text{img}}, e'_l \rangle}{\|e_{o_{\text{start}}} + c_t d_{\text{img}}\| \|e'_l\|}$$
Note that $\frac{\partial S(c_t)}{\partial c_t} = d_{img}$:$$\begin{aligned}
\frac{\partial \tilde{r}}{\partial c_t} &= \frac{1}{\|e_l'\|} \frac{\partial}{\partial c_t} \left( \langle S(c_t), e_l' \rangle \|S(c_t)\|^{-1} \right) \\
&= \frac{1}{\|e_l'\|} \left( \langle d_{img}, e_l' \rangle \|S(c_t)\|^{-1} + \langle S(c_t), e_l' \rangle \cdot (-1)\|S(c_t)\|^{-2} \frac{\partial \|S(c_t)\|}{\partial c_t} \right)
\end{aligned}$$The derivative of the norm is $\frac{\partial \|S(c_t)\|}{\partial c_t} = \frac{\langle S(c_t), d_{img} \rangle}{\|S(c_t)\|}$. Substituting this back:$$\begin{aligned}
\frac{\partial \tilde{r}}{\partial c_t} &= \frac{1}{\|e_l'\| \|S(c_t)\|^3} \left[ \langle d_{img}, e_l' \rangle \|S(c_t)\|^2 - \langle S(c_t), e_l' \rangle \langle S(c_t), d_{img} \rangle \right]
\end{aligned}$$By normalizing the terms with $\|d_{img}\|\|e'_l\|\|S(c_t)\|^2$, we rewrite the positivity condition in its angular form:$$\frac{\langle d_{img}, e_l' \rangle}{\|d_{img}\| \|e_l'\|} > \frac{\langle S(c_t), e_l' \rangle}{\|S(c_t)\| \|e_l'\|} \cdot \frac{\langle S(c_t), d_{img} \rangle}{\|S(c_t)\| \|d_{img}\|}$$$$\implies \cos(d_{img}, e_l') > \cos(S(c_t), e_l') \cdot \cos(S(c_t), d_{img})$$ This condition necessitates that the visual task direction $d_{img}$ aligns with the textual target $e_l'$. Since $e_l' = \mathcal{P}_{d_{text}}(e_l)$, $e_l'$ is projected onto the task direction $d_{img}$, while $S_t = e_{o_{\mathrm{start}}}+c_t d_{\mathrm{img}}$ denotes the ideal observation embedding obtained by shifting $e_{o_{\mathrm{start}}}$ along the task direction, and in general $S_t$ is not colinear with $d_{\mathrm{img}}$. For the expected VLM, the LHS dominating the RHS product of intermediate alignments, consequently the positive gradient property is maintained. Coupled with \textbf{Assumption \ref{ass} }, this restores the critical condition of monotonicity between the reward $\tilde{r}$ and latent task progress $\lambda_t$, effectively isolating the trend from background noise, theoretically validating the consistency of TDP rewards. The empirical validation in \textbf{Assumption ~\ref{tdp}} further corroborates the validity of \textbf{Assumption \ref{ass}} and the implications of \textbf{Theorem \ref{the3}} .
\end{proof}


\section{Additional Results}
\subsection{Empirical Validation of Task Direction Projection}
\label{tdp}

Figure ~\ref{fig:rew} compares the raw CLIP reward against the TDP-processed reward on oracle trajectories. The raw reward (orange) exhibits significant volatility and non-monotonicity, often decreasing as the agent approaches the goal. In contrast, the TDP reward (blue) demonstrates a monotonic progression aligned with the ground-truth task execution, empirically validating the noise suppression analysis in \textbf{Theorem \ref{the2}} and \textbf{\ref{the3}}.

\begin{figure}[h]
  \centering
  \includegraphics[width=\linewidth]{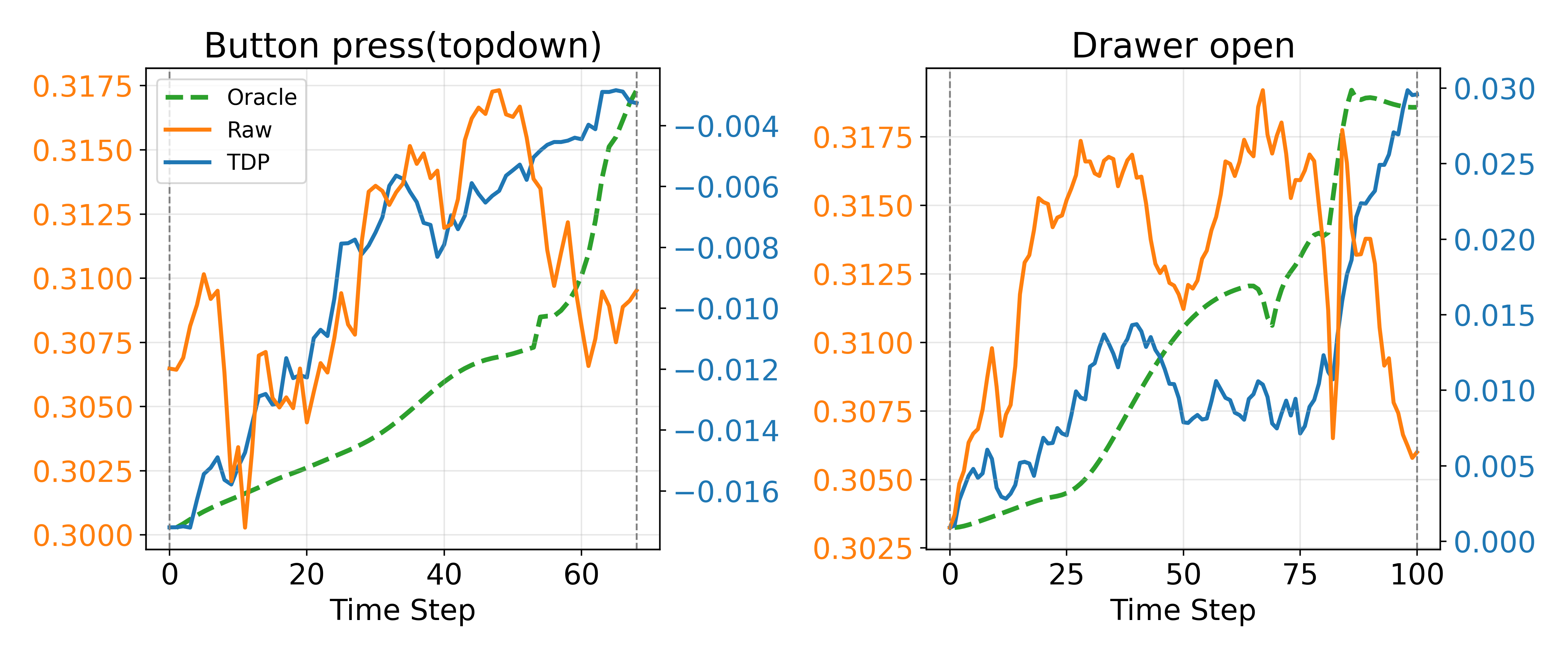}
  \caption{\textbf{Comparison of Reward Signal Quality on Oracle Trajectories.}}
  \label{fig:rew}
\end{figure}

\subsection{Robustness of Scene Encoder to Viewpoint Shifts}
 We evaluate reward consistency by randomly switching camera viewpoints every 5 steps along an oracle trajectory. As shown in Figure ~\ref{fig:shift}, the raw VLM reward fluctuates drastically with viewpoint changes. However, by effectively filtering out irrelevant view information, the reward curve derived from our fine-tuned Scene Encoder remains consistent despite these abrupt visual transitions, recovers signals that faithfully reflects the true task progress. This invariance is particularly critical for real-world robotic deployment, where reward systems must remain robust to unexpected camera jitter or calibration errors.
\begin{figure}[h]
  \centering
  \includegraphics[width=\linewidth]{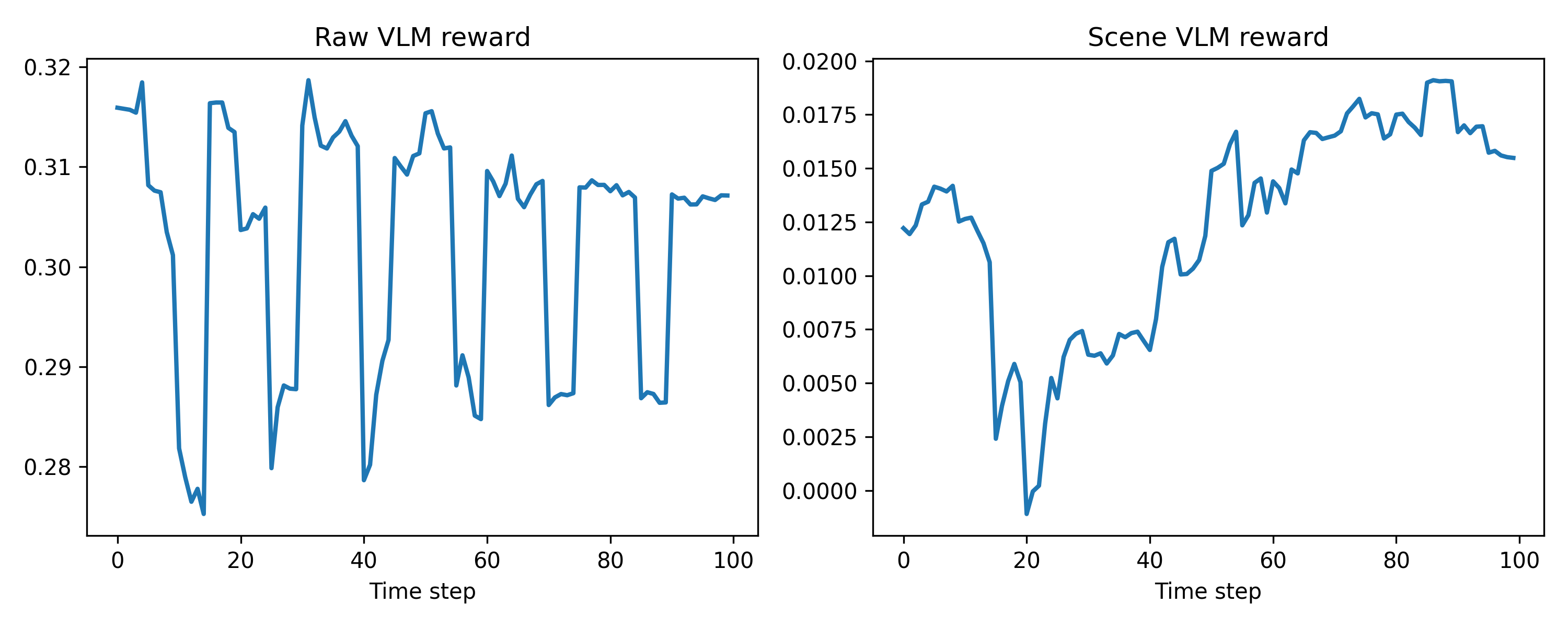}
  \caption{\textbf{Robustness to Camera Viewpoint Shifts}}
  \label{fig:shift}
\end{figure}

\subsection{Analysis of Confidence-Thresholded Shaping}
In addition to the confidence-thresholded shaping (CTS) mechanism used throughout the main experiments, we investigate two alternative strategies for suppressing noise in VLM-based rewards: exponential moving average (EMA) smoothing and Kalman filtering. Both methods aim to stabilize reward signals by attenuating high-frequency fluctuations, but differ in their inductive biases and responsiveness to sudden semantic alignment.

As shown in Figure~\ref{fig:filter_comparison}, all three filtering strategies effectively suppress spurious reward activations and ultimately enable successful learning. EMA filtering improves stability by smoothing short-term fluctuations, but this temporal averaging can delay the reward’s response to abrupt progress signals in certain tasks. Kalman filtering enforces stronger temporal coherence, yet exhibits higher sensitivity to its modeling assumptions and hyperparameters, leading to less consistent convergence behavior across tasks. In contrast, CTS applies a simple, distribution-aware gating mechanism that preserves sharp reward transitions once semantic confidence surpasses the noise floor, yielding more uniform convergence dynamics. Given its favorable balance between robustness, responsiveness, and implementation simplicity, we adopt CTS as the default choice in MARVL.

\begin{figure}[h!]
  \centering
  \includegraphics[width=0.8\linewidth]{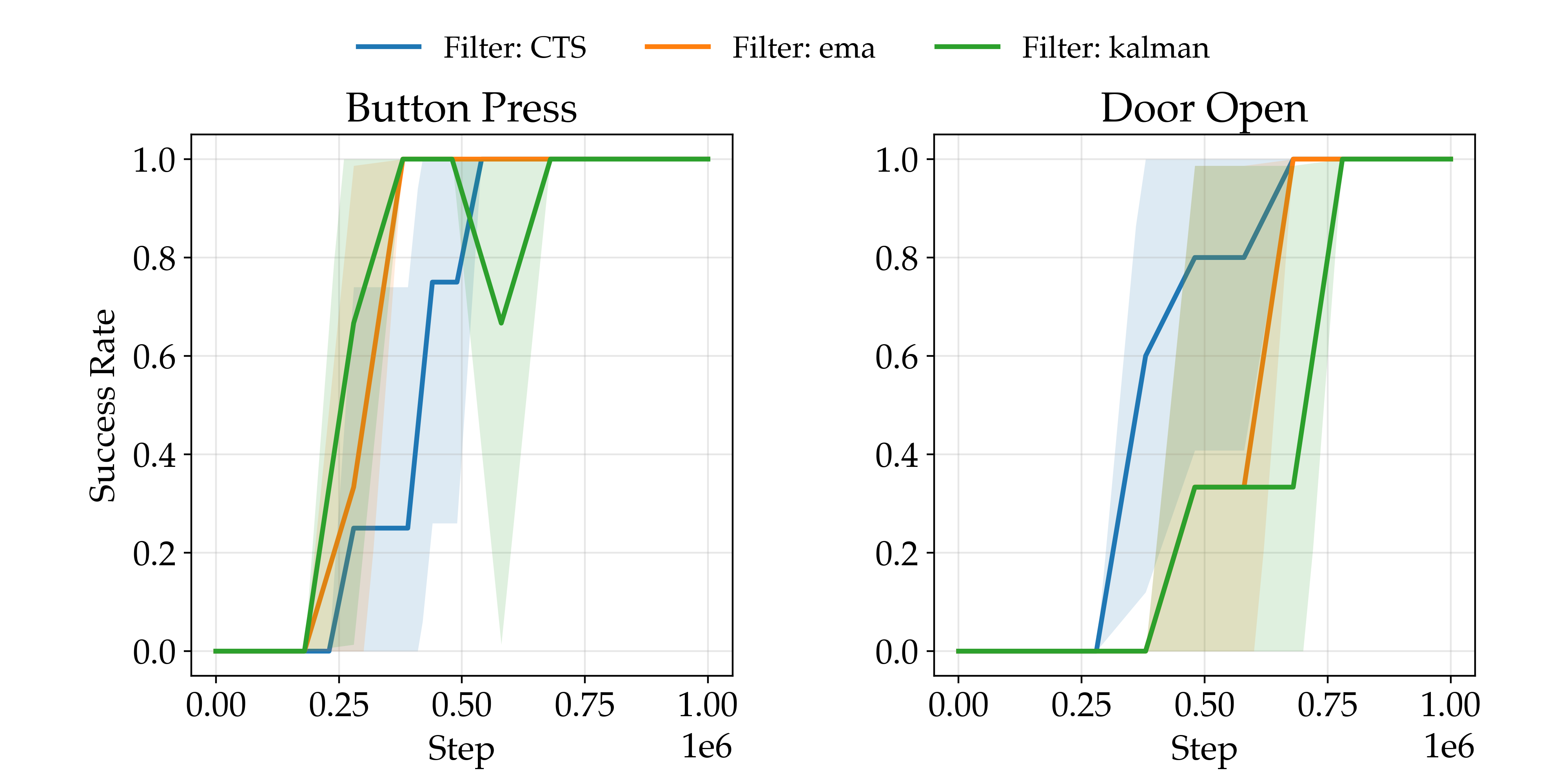}
  \caption{\textbf{Comparison of Different Confidence-Based Reward Filtering Strategies.}}
  \label{fig:filter_comparison}
\end{figure}

\subsection{Effect of View-Disentangled Fine-Tuning on Semantic Alignment}
\begin{figure}[h!]
  \centering
  \includegraphics[width=\linewidth]{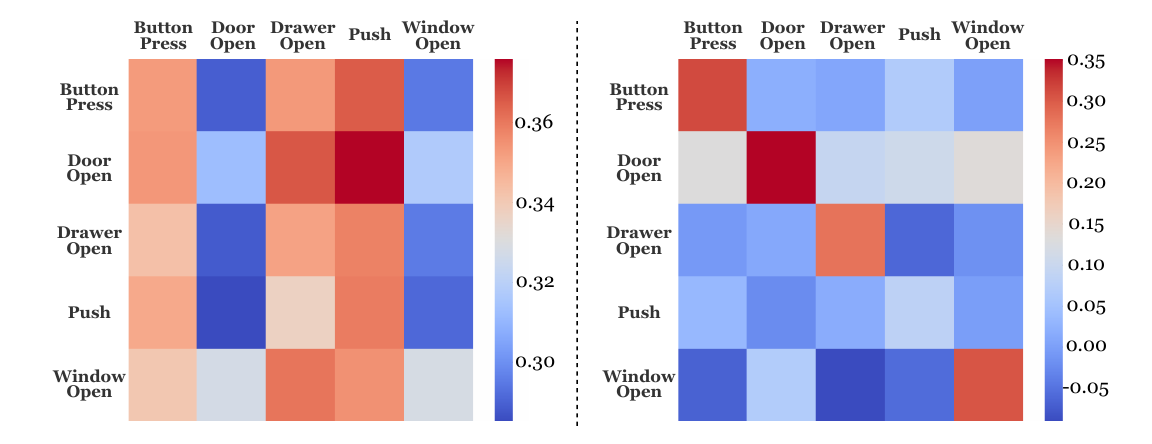}
  \caption{\textbf{Text–Image Similarity Matrices Before (left) and After (right) Scene–View Disentangled Fine-tuning.} Fine-tuning enhances diagonal dominance and task separability. Lower similarity for Push reflects scene sparsity rather than semantic confusion.}
  \label{fig:heatmap}
\end{figure}

Figure \ref{fig:heatmap} analyzes the effect of scene–view disentangled fine-tuning on text–image alignment. Compared to the original embedding space (Figure \ref{fig:heatmap}, left), the fine-tuned representations (Figure \ref{fig:heatmap}, right) exhibit a markedly stronger diagonal structure, indicating improved task separability. Each visual observation aligns most strongly with its corresponding instruction, while off-diagonal activations are substantially reduced. This confirms that removing view-dependent nuisance factors helps restore semantic coherence in the joint embedding space, directly addressing the semantic misalignment issue discussed in Section 2.

A consistent exception is observed for the Push task, whose similarity values remain low across all instructions. This behavior is expected: in the Push environment, the scene is visually sparse—aside from the robot arm, the workspace is largely empty—providing limited semantic cues for the VLM to associate with the instruction. Importantly, although the absolute similarity is low, the diagonal entry remains dominant, indicating preserved relative alignment rather than increased confusion.

These observations highlight both the effectiveness of view-disentangled fine-tuning for improving semantic alignment and the inherent difficulty of visually under-specified tasks, motivating the confidence-aware reward shaping introduced in Section 3.3.

\subsection{Effect of the Mixing Parameter $\alpha$ in TDP}

 \begin{figure}[h!]
  \centering
  \includegraphics[width=0.25\linewidth]{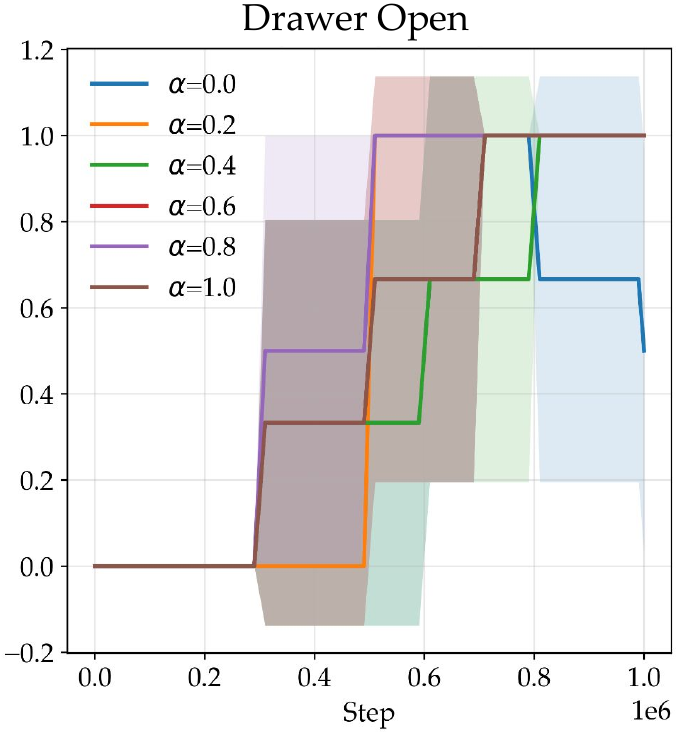}
  \caption{\textbf{Effect of the Mixing Parameter $\alpha$ in TDP.}}
  \label{fig:finalablation}
\end{figure}

We study the effect of the mixing parameter $\alpha$ in Task Direction Projection (TDP), which controls the interpolation between the raw visual embedding and the task-aligned projected component. Figure~\ref{fig:finalablation} reports the success rate on the Drawer Open task for different values of $\alpha$. Overall, intermediate values of $\alpha$ yield more reliable learning dynamics, while extreme settings tend to degrade stability. In particular, $\alpha = 0$ (no projection) results in slower convergence and reduced final performance, indicating that unstructured similarity signals are insufficient for consistent progress estimation. Conversely, overly large $\alpha$ can introduce premature or noisy progress signals, leading to higher variance across runs. These results suggest that a moderate balance between raw observations and task-aligned projection is important for stable training, and we therefore fix $\alpha$ to a constant intermediate value in all experiments.

\newpage
\begin{table}[htbp]
\centering
\begin{minipage}[t]{0.48\textwidth}
\centering
\caption{Hyperparameters for MARVL and Baselines}
\begin{tabular}{ll}
\toprule
\textbf{Parameter} & \textbf{Value} \\
\midrule
\multicolumn{2}{c}{\textit{Common Hyperparameters (SAC)}} \\
\midrule
VLM reward weight $\rho$ & 0.05 \\
Adam learning rate & $3 \times 10^{-4}$ \\
Batch size & 256 \\
Target network $\tau$ & 0.005 \\
Discount factor $\gamma$ & 0.99 \\
Hidden layer dimensions & (256, 256) \\
Initial random exploration steps & $10^4$ \\
Actor target entropy & $-\text{act\_dim} / 2$ \\
Actor/Critic min std scale & $1 \times 10^{-3}$\\ 
Action repeat & 10 \\
\midrule
\multicolumn{2}{c}{\textit{MARVL}} \\
\midrule
TDP projection blend $\alpha$ & 0.8 \\
quantile threshold $m$ & 0.97 \\
Logistic steepness parameter $\kappa$ & 100 \\

Stage transition similarity threshold & 0.997 \\
Stage transition patience & 4 \\
\midrule
\multicolumn{2}{c}{\textit{FuRL}} \\
\midrule
FuRL cosine margin & 0.25 \\
FuRL L2 distance margin & 0.25 \\
FuRL embedding buffer size & $2 \times 10^4$ \\
FuRL window gap $k$ & 10 \\
FuRL reward projection network & (256, 64) \\
\midrule
\multicolumn{2}{c}{\textit{TD3}} \\
\midrule
TD3 policy noise & 0.2 \\
TD3 noise clip & 0.5 \\
TD3 policy update frequency & 2 \\
TD3 exploration noise & 0.1 \\
\midrule
\multicolumn{2}{c}{\textit{Relay}} \\
\midrule
Relay success threshold & 2500 \\
Relay exploration noise & 0.2 \\
\midrule
\multicolumn{2}{c}{\textit{LIV }} \\
\midrule
LIV learning rate & $1 \times 10^{-5}$ \\
LIV weight decay & $1 \times 10^{-3}$ \\
LIV discount factor & 0.98 \\

%
\bottomrule
\end{tabular}
\label{tab:hyperparameters}
\end{minipage}
\hfill
\begin{minipage}[t]{0.48\textwidth}
\centering
\caption{Fine-tuning Hyperparameters}
\begin{tabular}{ll}
\toprule
\textbf{Parameter} & \textbf{Value} \\
\midrule
\multicolumn{2}{c}{\textit{Architecture}} \\
\midrule
Image Size & 224 \\
Patch Size & 14 \\
ViT Model & ViT-B-32 \\
Pre-trained Weights & laion2b\_s34b\_b79k \\
Encoder Output Dimension & 512 \\
Decoder Input Dimension & 1024 \\
Conv Channels & [256, 128, 64, 32] \\
Conv Kernel Size & 3 \\
Upsampling Mode & bilinear \\
Dropout Rate & 0.1 \\
\midrule
\multicolumn{2}{c}{\textit{Training}} \\
\midrule
Batch Size & 32 \\
Learning Rate & $1 \times 10^{-4}$ \\
Total Epochs & 90 \\
Stage-1 / Stage-2 / Stage-3 Epochs & 30 / 30 / 30 \\
\midrule
\multicolumn{2}{c}{\textit{Optimizer}} \\
\midrule
Weight Decay & 0.01 \\
Label Smoothing & 0.1 \\
Gradient Accumulation Steps & 2 \\
Max Gradient Norm & 1.0 \\
Scheduler Type & cosine \\
Warmup Epochs & 5 \\
Minimum Learning Rate & $1 \times 10^{-6}$ \\
\midrule
\multicolumn{2}{c}{\textit{Loss Weight Configuration}} \\
\midrule
LPIPS Weight & 0.05 \\
Shuffle Coefficient & 1.0 \\
Scene Consistency Coefficient & 1.0 \\
View Consistency Coefficient & 0.25 \\
Scene Contrastive Coefficient & 1.0 \\
View Contrastive Coefficient & 0.25 \\
Clip Coefficient & [0, 0.5, 1] \\
Temperature & 0.07 \\
\bottomrule
\end{tabular}
\label{tab:finetune}
\end{minipage}
\end{table}


\end{document}